\documentclass{article} \usepackage{iclr2027_conference,times}
\usepackage{amsmath,amssymb,amsthm,mathtools}
\newtheorem{proposition}{Proposition}

\newtheorem{remark}{Remark}
\newtheorem{assumption}{Assumption}

\usepackage{amsmath,amsfonts,bm}

\def\eqref#1{equation~\ref{#1}}

\def\1{\bm{1}}

\DeclareMathAlphabet{\mathsfit}{\encodingdefault}{\sfdefault}{m}{sl}
\SetMathAlphabet{\mathsfit}{bold}{\encodingdefault}{\sfdefault}{bx}{n}

\usepackage{booktabs}
\usepackage{url}
\usepackage{graphicx}
\usepackage{multirow}
\usepackage{algorithm}
\usepackage{algpseudocode}
\usepackage{enumitem}
\usepackage[table]{xcolor}
\usepackage{hyperref}
\hypersetup{
  hidelinks,
  pdftitle={Learn Feasibility Once, Optimize All Objectives: Derivative-Free Diffusion Models for Chance-Constrained Programming},
  pdfauthor={Ziwen Liu, Yan Liu, Congying Han, Tiande Guo, Yao Yan, Weichen Zhao}
}

\title{Learn Feasibility Once, Optimize All Objectives: Derivative-Free Diffusion Models for Chance-Constrained Programming}

\author{
Ziwen Liu$^{1}$, Yan Liu$^{2}$, Congying Han$^{3}$, Tiande Guo$^{3}$, Yao Yan$^{4}$\thanks{Co-corresponding authors: Yao Yan and Weichen Zhao.}, Weichen Zhao$^{2}$\footnotemark[1] \\
$^{1}$School of Mathematical Sciences, Beijing University of Posts and Telecommunications \\
$^{2}$School of Statistics and Data Science, Nankai University \\
$^{3}$School of Mathematical Sciences, University of the Chinese Academy of Sciences \\
$^{4}$School of Statistics and Data Science, Southwest University of Finance and Economics
}

\iclrfinalcopy 
\begin{document}
\maketitle
\fancyhead{}
\renewcommand{\headrulewidth}{0pt}
\begin{abstract}
Chance-constrained programs (CCPs) optimize decisions under uncertainty by limiting the probability of constraint violation. Despite advances in traditional and learning-based approaches, optimizing non-convex or non-smooth objectives and adapting to different objectives under fixed chance constraints remain challenging. In this paper, we propose a \textbf{D}erivative-free \textbf{D}iffusion-based framework that \textbf{D}isentangles constraint modeling from objective optimization, termed \textbf{D$^3$Opt}. We learn the chance-feasible structure once, independently of any particular objective, by training a risk-conditioned diffusion model solely on constraint-filtered decisions and freezing it as a reusable prior for post-specified objectives. At inference time, we propose an annealed, particle-based Feynman--Kac correction along the frozen reverse diffusion process to optimize post-specified objectives using only function evaluations. This enables derivative-free optimization of non-convex and non-smooth objectives without objective-specific retraining. We prove that the correction preserves feasibility when this property holds for the frozen prior, and derive an optimization-error bound separating learned-prior coverage, finite-particle approximation, and finite-temperature effects. Experiments on linear Gaussian CCPs, objective-transfer tasks, and chance-constrained economic dispatch demonstrate effective optimization across smooth and non-smooth objectives, including non-convex cases, and objective generalization under fixed chance constraints without retraining.
\end{abstract}
\section{Introduction}

Chance-constrained programming provides a principled framework for decision making under uncertainty by requiring constraints to hold with a prescribed probability. It has applications in economic dispatch~\citep{wang2017cced,yang2021cced}, optimal power flow~\citep{bienstock2014chance}, vehicle routing~\citep{dinh2018exact}, and production planning~\citep{zhang2023new}. Despite extensive research, evaluating probabilistic constraints and optimizing over the resulting non-convex feasible regions can remain computationally challenging~\citep{nemirovski2007convex,pagnoncelli2009sample}. In many applications, the uncertainty model and reliability requirements remain fixed while operational objectives or preferences change~\citep{cao2013chance,ren2026stochastic}. These tasks share the same chance-feasible structure, even though they favor different decisions. This observation raises the first question:

\textit{Q1: Can feasibility be learned once and reused across objectives?}

Learning-based approaches use data to guide optimization, and diffusion-based methods learn decision distributions that support constrained solution generation~\citep{kong2025constrained,zhang2025ggdopt,ding2025exploring}. However, the extent to which these distributions can be reused across objectives depends on their construction. GGDOpt~\citep{zhang2025ggdopt}, for example, trains on solutions of objective-dependent restricted problems, so its training data are tailored to a particular objective rather than constructed solely from constraint information. To remove this dependence, we separate learning \emph{where feasible decisions lie} from deciding \emph{which feasible decisions are preferred}. We generate training decisions using only constraint information and train a risk-conditioned diffusion model without evaluating the objective. Once trained, the model is frozen and reused across post-specified objectives under fixed chance constraints, without objective-specific data generation.

Reusing the learned constraint prior across objectives also requires handling the different optimization challenges these objectives introduce. In practical applications, objectives may be non-convex, non-smooth, or accessible only through function evaluations~\citep{masutomi2013evolutionary,guesmi2020chaotic}. Since guidance based on objective gradients or Hessians cannot cover all such cases, a second question follows:

\textit{Q2: How can the frozen constraint prior support optimization of non-convex and non-smooth objectives using only function evaluations?} 

We introduce an annealing path that progressively strengthens the objective preference along the frozen reverse diffusion process and implement it through particle-based Feynman--Kac correction (FKC)~\citep{delmoral2004feynman,skreta2025fkc,singhal2025fksteering}. The correction reweights and resamples particles using objective values evaluated at denoised decision predictions, concentrating the search toward low-objective regions without requiring objective gradients or Hessians. The same frozen model can therefore support derivative-free optimization across different objectives.

By combining the reusable constraint prior with the inference-time Feynman--Kac correction, we obtain \textbf{D$^3$Opt}, a \textbf{D}erivative-free \textbf{D}iffusion-based framework that \textbf{D}isentangles constraint modeling from objective optimization. Our main contributions are:
\begin{itemize}[leftmargin=1.5em]
\item We propose D$^3$Opt, which learns a reusable risk-conditioned diffusion prior solely from constraint information and supports objective generalization without retraining.
\item We develop an annealed, particle-based Feynman--Kac correction that optimizes post-specified non-convex and non-smooth objectives using only function evaluations.
\item We establish feasibility preservation under an almost-surely feasible prior and derive an error bound separating prior coverage, finite-particle, and finite-temperature effects.
\item We demonstrate solution quality and objective generalization on linear Gaussian CCPs, objective-transfer tasks, and chance-constrained economic dispatch.
\end{itemize}

\section{Related Work}
\textbf{Chance-constrained optimization.} Classical methods for CCPs include deterministic reformulations, conservative convex approximations, and sample average approximation (SAA)~\citep{nemirovski2007convex,pagnoncelli2009sample}. The non-convexity and discontinuity of empirical chance constraints motivate mixed-integer, difference-of-convex, and zero--one optimization methods~\citep{wang2023proximal,zhou2024zeroone}. Derivative-free solvers such as differential evolution and mesh adaptive direct search also accommodate non-smooth objectives~\citep{storn1997differential,audet2006mads}. These methods optimize a specified objective for each problem instance rather than learning a prior that can be reused across objectives.

\textbf{Diffusion models for constrained optimization.} Diffusion-based methods learn decision distributions and use generated samples to search under constraints~\citep{ding2025exploring,chen2024opportunities}. DiffOPT~\citep{kong2025constrained} represents unknown constraints through a learned data distribution and combines it with an objective-induced Boltzmann factor. It uses gradient-guided sampling for differentiable objectives and iterative importance sampling for non-differentiable ones. GGDOpt~\citep{zhang2025ggdopt} addresses CCPs by training on solutions of objective-dependent restricted problems and applying first- or second-order guidance at inference. Our framework learns a risk-conditioned prior from constraint-only data and applies annealed Feynman--Kac particle correction to the frozen sampler, adapting it to new objectives using objective values without retraining.

\section{Preliminaries}

\subsection{Problem Definition}
\label{sec:problem_definition}
We consider the chance-constrained programming:
\begin{equation} \min_{x\in\mathcal X} f(x)\qquad\mathrm{s.t.}\qquad \mathbb P_{\xi\sim\mathbb P}\bigl(g(x,\xi)\leq0\bigr)\geq1-\rho, \label{eq:ccp} \end{equation}
where $\mathcal X\subseteq\mathbb R^d$ is the decision domain, $f:\mathcal X\to\mathbb R$ is the objective, $\xi$ represents uncertainty, and $\rho\in(0,1)$ is the admissible violation probability. The chance-feasible set is
\begin{equation} 
\mathcal X_\rho=\left\{x\in\mathcal X:\mathbb P_{\xi\sim\mathbb P}\bigl(g(x,\xi)\leq0\bigr)\geq1-\rho\right\}. \label{eq:feasible_set} 
\end{equation}
Following the optimization-as-sampling perspective~\citep{kong2025constrained,zhang2025ggdopt}, let $p_{\mathrm F}(\cdot\mid\rho)$ be a reference distribution supported on $\mathcal X_\rho$. For inverse temperature $\beta\geq0$, we define
\begin{equation} \pi_\beta(x\mid\rho)=\frac{p_{\mathrm F}(x\mid\rho)e^{-\beta f(x)}}{Z_\beta(\rho)},\qquad Z_\beta(\rho)=\int_{\mathcal X}p_{\mathrm F}(x\mid\rho)e^{-\beta f(x)}\,dx, \label{eq:tilted_target} \end{equation}
assuming $0<Z_\beta(\rho)<\infty$. Under suitable regularity conditions and positive reference mass in neighborhoods of feasible global minimizers, $\pi_\beta$ concentrates on these minimizers as $\beta\to\infty$.
\subsection{Diffusion Model}
\label{sec:prelim_diffusion}
Score-based diffusion models learn to reverse a stochastic noising process using the score of its marginals~\citep{song2021scorebased}. We write the learned reverse process in generative time as:
\begin{equation}
    dX_t=b_{\theta,t}(X_t,\rho)\,dt+\sigma_t\,dB_t,
    \qquad
    X_0\sim p_{\mathrm{ref}},
\label{eq:reverse_diffusion}
\end{equation}
where $b_{\theta,t}$ contains the learned score contribution. We denote its path law by $\mathbb Q_\theta(\cdot\mid\rho)$ and its terminal distribution by $p_\theta(\cdot\mid\rho)$. Beyond sampling from the pretrained distribution itself, a growing line of work studies inference-time tilting toward an unnormalized target
\begin{equation*}
\label{eq:diffusion_tilt}
    \pi_{\theta,\beta}(x\mid\rho)
    \propto
    p_\theta(x\mid\rho)\exp[\beta r(x)].
\end{equation*}
Recent methods construct such information through value functions, denoised predictions, or lookahead rewards~\citep{li2025svdd,kim2026lookahead}. Other approaches instead control the proposal dynamics or combine transport with particle reweighting \citep{ren2026driftlite,skreta2025fkc}.

For optimization, however, a fixed tilt is generally not the most natural viewpoint. As discussed in Section~\ref{sec:problem_definition}, approaching the global optimum corresponds to the low-temperature limit $\beta\to\infty$, which motivates an annealed sequence $0=\beta_0<\beta_1<\cdots<\beta_K$ of progressively sharper target distributions. This creates a sequential change-of-measure problem: the sampler must gradually redistribute probability mass from the pretrained diffusion prior toward increasingly objective-favored regions. FKC provides a method for this purpose by representing the sequence of target changes.
\subsection{Feynman--Kac Correction}
\label{sec:prelim_fkc}
{\paragraph{Feynman--Kac change of measure.} Let $\mathbb Q$ denote the path law of a baseline diffusion. Feynman--Kac (FK) transforms modify this law by assigning nonnegative multiplicative weights to trajectories~\citep{delmoral2004feynman,lelievre2010free}. Given a terminal potential $G$ and a running potential $V_t$, the transformed path law takes the form
\[
\frac{d \mathbb Q^{G,V}}{d\mathbb Q}(X_{0:T})\propto \exp\!\left(-\int_0^T V_t(X_t)\,dt\right)G(X_T).
\]
Thus, Feynman--Kac correction is fundamentally a change of measure on the baseline dynamics. It may be realized through a drift modification when the associated backward information function is available, or through particle reweighting without modifying the baseline dynamics. Further continuous-time details are provided in Appendix~\ref{app:fkc}.

\paragraph{Feynman--Kac correction with particles.}
Instead of modifying the diffusion drift, FKC can keep the baseline dynamics unchanged and adjust particle probabilities through multiplicative weights. This yields a transport--diffusion--reweighting view of the FK evolution. Sequential Monte Carlo (SMC) approximates this process by alternating particle propagation, reweighting, and resampling \citep{delmoral2004feynman}. This particle formulation underlies recent FK-based inference-time steering methods for diffusion models~\citep{skreta2025fkc,ren2026driftlite,ou2026smc}.

For our method, consider a discretization $0=t_0<\cdots<t_K=T$ and denote the baseline Markov kernels by $M_k:=P_{t_{k-1},t_k}$. Given nonnegative incremental FK potentials $G_k:\mathcal X\times\mathcal X\to\mathbb R_+$, the FK correction reweights each baseline trajectory $X_{0:K}$ by the multiplicative factor $\prod_{k=1}^K G_k(X_{k-1},X_k)$. Thus, the pretrained diffusion kernels retain their role as the transport mechanism, whereas the FK potentials redistribute probability mass toward a prescribed sequence of tilted distributions. } The formal discrete-time FK measures and their connection to the continuous formulation are given in Appendix~\ref{app:fkc}.

\section{\texorpdfstring{D$^3$Opt}{D3Opt}}
\label{sec:method}
We introduce D$^3$Opt, a derivative-free, diffusion-based framework that disentangles constraint modeling from objective optimization. Section~\ref{sec:objective_free_diffusion} describes how D$^3$Opt learns a reusable risk-conditioned diffusion prior from constraint-only data. Section~\ref{sec:derivative_free_fkc} presents the annealed, particle-based Feynman--Kac correction for optimizing post-specified objectives using only function evaluations. Section~\ref{sec:theoretical_analysis} establishes feasibility preservation under the frozen prior and analyzes the resulting optimization error.

\subsection{Objective-Free Feasible Diffusion Model}
\label{sec:objective_free_diffusion}
\paragraph{Constraint-only data generation.}
To decouple constraint learning from objective optimization, we construct the training dataset without evaluating any objective function or solving objective-dependent optimization problems. Candidate decisions are generated and retained using only the decision domain and chance-constraint information. The resulting dataset can therefore be reused when the objective changes while the constraints remain fixed.

For each problem, we choose an objective-independent proposal distribution $\nu$ on the deterministic decision domain $\mathcal X$. We independently draw a candidate $X\sim\nu$ and a target risk level $\rho\sim\omega$, where $\omega$ is a density on $\mathcal R=[\rho_{\min},\rho_{\max}]$. The proposal distribution may differ across problems to reflect their decision domains, but remains fixed across objectives for the same problem.

We assess each candidate using two independent scenario sets, $\Xi_{\mathrm{lab}}$ and $\Xi_{\mathrm{val}}$. These provide empirical estimates of the constraint-satisfaction probability $c(x)=\mathbb P_{\xi\sim\mathbb P}(g(x,\xi)\leq0)$:
\begin{equation*}
\widehat c_{\mathrm{lab}}(x)=\frac{1}{N_{\mathrm{lab}}}\sum_{s=1}^{N_{\mathrm{lab}}}\mathbb I\{g(x,\xi_{\mathrm{lab}}^{(s)})\leq0\},
\qquad \widehat c_{\mathrm{val}}(x)=\frac{1}{N_{\mathrm{val}}}\sum_{s=1}^{N_{\mathrm{val}}}\mathbb I\{g(x,\xi_{\mathrm{val}}^{(s)})\leq0\}.
\label{eq:empirical_condition}
\end{equation*}
For a chance constraint, the indicator requires all components of $g$ to be non-positive. We retain $(X,\rho)$ when
\begin{equation*} A_N(X,\rho)=\mathbb I\left\{\widehat c_{\mathrm{lab}}(X)\geq1-\rho,\ \widehat c_{\mathrm{val}}(X)\geq1-\rho\right\}=1.
\label{eq:acceptance_rule}  
\end{equation*}

Repeating this procedure produces $\mathcal D_F=\{(x^{(i)},\rho^{(i)})\}_{i=1}^{N_D}$, where each decision is paired with its sampled target risk. Conditional on the scenario sets, accepted decisions at a fixed risk level follow
\begin{equation*} p_{F,N}(x\mid\rho)\propto\nu(x)A_N(x,\rho). \label{eq:accepted_distribution} \end{equation*}
Both $\nu$ and $A_N$ are independent of the objective. The diffusion model thus learns from constraint-filtered decisions without a training preference for any particular objective.

\paragraph{Conditional diffusion training.} We instantiate the score-based diffusion framework introduced in Section~\ref{sec:prelim_diffusion} using a variance-preserving DDPM with the standard noise-prediction parameterization. For each objective-free feasible sample $(X_0,\rho)\in\mathcal D_F$, the forward process is
\begin{equation*}
    X_t
    =
    \sqrt{\bar\alpha_t}X_0
    +
    \sqrt{1-\bar\alpha_t}\,\epsilon,
    \qquad
    \epsilon\sim\mathcal N(0,I).
\end{equation*}
We train a risk-conditioned noise predictor $\epsilon_\theta(X_t,t,\rho)$ using the standard denoising objective
\begin{equation*}
\label{eq:diffusion_training}
    \mathcal L_{\mathrm{diff}}(\theta)
    =
    \mathbb E_{(X_0,\rho),t,\epsilon}
    \left[
        \left\|
            \epsilon-
            \epsilon_\theta(X_t,t,\rho)
        \right\|_2^2
    \right].
\end{equation*}
The resulting numerical reverse process defines conditional Markov kernels $M_k^\theta(x_{k-1},dx_k\mid\rho)$, $k=1,\ldots,K$, with intermediate marginals $q_k^\theta(\cdot\mid\rho)$ and terminal distribution $p_\theta(\cdot\mid\rho)$ approximating the objective-free feasible distribution learned from $\mathcal D_F$.

The objective function $f$ is absent from data generation, diffusion training, checkpoint selection, and model parameters. After training, the conditional reverse process is frozen and used as the baseline transport for the inference-time Feynman--Kac correction introduced in the next subsection.
\subsection{Derivative-Free Feynman--Kac Correction}
\label{sec:derivative_free_fkc}
After training, the diffusion model is frozen. We then introduce a post-specified objective $f:\mathcal X\rightarrow\mathbb R$, which may be non-convex, non-smooth, or available only through function evaluations.

\paragraph{Annealed derivative-free FKC.}
Consider the generative-time discretizations $0=t_0<t_1<\cdots<t_K=T$. For an intermediate particle state $X_k$, let $\widehat X_k:=\mathsf D_{\theta,k}(X_k,\rho)$ denote the terminal decision predicted by the frozen conditional diffusion model, with $\widehat X_K=X_K$ at the final step. Since the objective is defined on the terminal decision, we evaluate the current particle through $f(\widehat X_k)$. To progressively concentrate the particle system toward low-objective regions, we introduce an inverse-temperature schedule $0=\beta_0\leq\beta_1\leq\cdots\leq\beta_K=\beta_{\max}$ and define the incremental FK potential
\begin{equation}
\label{eq:annealed_fk_potential}
    G_k(X_{k-1},X_k;\rho)
    \triangleq
    \exp\!\left[
        \beta_{k-1}f(\widehat X_{k-1})
        -
        \beta_k f(\widehat X_k)
    \right],
    \qquad k=1,\ldots,K.
\end{equation}
This potential requires only evaluations of the objective, in particular, no gradient or Hessian of $f$ is needed. Moreover, the incremental potentials telescope along the reverse diffusion trajectory
\begin{equation*}
\label{eq:terminal_tilt}
    \prod_{k=1}^{K}
    G_k(X_{k-1},X_k;\rho)=\exp\left[
        \beta_0 f(\widehat X_0)-
        \beta_K f(\widehat X_K)
    \right]=\exp\left[-\beta_K f(X_K)\right],
\end{equation*}
where $\beta_0=0$ and $\widehat X_K=X_K$. Hence the terminal FK marginal is $\pi_{\theta,\beta_K}(x\mid\rho) \propto p_\theta(x\mid\rho)\exp[-\beta_K f(x)]$. Thus, the intermediate denoised predictions determine how particle mass is redistributed during sampling, but do not alter the desired terminal change of measure. The chance constraint does not enter the FK potential through an explicit penalty, and the risk level $\rho$ instead enters through the frozen conditional diffusion model.

\paragraph{Interacting-particle approximation.}
We approximate the FK flow using $M$ interacting particles. Starting from $X_0^{(m)}\sim p_{\mathrm{ref}}$ and $w_0^{(m)}=1/M$, each particle is propagated through the frozen conditional reverse diffusion $X_k^{(m)} \sim M_k^\theta\!(X_{k-1}^{(m)},\cdot\mid\rho)$, $k=1,\ldots,K$. We use a prescribed inverse-temperature schedule to progressively increase the influence of the post-specified objective:
\begin{equation}
    \beta_k = \beta_{\max} \left(k/K\right)^\nu, \qquad k=0,\ldots,K,\quad \nu>0.
    \label{eq:prescribed_beta}
\end{equation}
Here, $\beta_{\max}$ controls the terminal objective tilt, while $\nu$ determines how rapidly the objective influence is introduced along the generative trajectory. The schedule is fixed before sampling and satisfies $\beta_0=0$ and $\beta_K=\beta_{\max}$. Using the incremental potential in \eqref{eq:annealed_fk_potential}, the particle weights are updated as
\begin{equation*}
    \widetilde w_k^{(m)} = w_{k-1}^{(m)} G_k( X_{k-1}^{(m)},X_k^{(m)};\rho ), \qquad w_k^{(m)} =\widetilde w_k^{(m)}/ \sum_{j=1}^{M}\widetilde w_k^{(j)}.
\end{equation*}
Although the temperature schedule is prescribed, resampling is performed adaptively according to the resulting weight degeneracy. Since the particle weights are normalized, their effective sample size is $\operatorname{ESS}_k =1/\sum_{m=1}^{M} (w_k^{(m)})^2$. Whenever $\operatorname{ESS}_k<\tau_{\mathrm{res}}M$, with $\tau_{\mathrm{res}}\in(0,1]$, we apply systematic resampling according to $\{w_k^{(m)}\}_{m=1}^{M}$ and reset all weights to $1/M$. Otherwise, the normalized weights are retained for the next propagation step. Thus, the prescribed temperature schedule controls the progressive objective tilt, while ESS is used only to determine when particle resampling is necessary.

\subsection{Feasibility Preservation and Error Analysis}
\label{sec:theoretical_analysis}
We next characterize the theoretical consequences of our method. We first show that, at the ideal measure level, FKC preserves any almost-sure feasibility already encoded by the frozen diffusion prior. We then analyze how finite-temperature concentration, approximation of the ideal feasible distribution, and finite-particle simulation affect the resulting optimizer.

\paragraph{Feasibility preservation under FKC.}
A key consequence of the proposed objective--constraint disentanglement is that the objective-only FKC cannot create probability mass outside the support of the frozen diffusion prior. Consequently, if the frozen diffusion is chance-feasible almost surely, the FK-tilted distribution remains chance-feasible almost surely, FKC itself does not introduce infeasibility. Let $\mathbb{Q}_\theta(\cdot\mid\rho)$ denote the path law of the frozen conditional reverse diffusion, and let $\mathbb{Q}_\theta^{\mathrm{FK}} (\cdot\mid\rho)$ denote the FK transform induced by the incremental potentials in \eqref{eq:annealed_fk_potential}.

\begin{proposition}[Feasibility preservation under FKC]
\label{prop:fk_feasibility}
Fix $\rho\in(0,1)$ and suppose that the Feynman--Kac path weight is nonnegative, integrable, and has a strictly positive normalizing constant, then $\mathbb{Q}_\theta^{\mathrm{FK}}$ is absolutely continuous with respect to $\mathbb{Q}_\theta(\cdot\mid\rho)$, i.e. $\mathbb{Q}_\theta^{\mathrm{FK}}(\cdot\mid\rho) \ll \mathbb{Q}_\theta(\cdot\mid\rho)$. Consequently, the corresponding terminal marginal satisfies
\begin{equation}
    \pi_{\theta,\beta_K}(\cdot\mid\rho)
    \ll
    p_\theta(\cdot\mid\rho).
    \label{eq:terminal_absolute_continuity}
\end{equation}
In particular, if the frozen diffusion is chance-feasible almost surely, i.e. $p_\theta(\mathcal X_\rho\mid\rho)=1$, then the FK-corrected terminal distribution also satisfies $\pi_{\theta,\beta_K}(\mathcal X_\rho\mid\rho)=1$.
\end{proposition}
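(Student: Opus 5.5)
The argument works directly from the definition of the FK transform as a reweighting of $\mathbb{Q}_\theta(\cdot\mid\rho)$ by a nonnegative density. Write the path weight as
\[
W(X_{0:K})=\prod_{k=1}^K G_k(X_{k-1},X_k;\rho)=\exp\!\left[-\beta_K f(X_K)\right],
\]
using the telescoping identity established after \eqref{eq:annealed_fk_potential}. Set $Z:=\mathbb{E}_{\mathbb{Q}_\theta}[W]$, which lies in $(0,\infty)$ by hypothesis. Then
\[
\mathbb{Q}_\theta^{\mathrm{FK}}(A\mid\rho)=Z^{-1}\int_A W\,d\mathbb{Q}_\theta(\cdot\mid\rho)
\]
for every measurable path set $A$. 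Nonnegativity and integrability of $W$, together with $Z>0$, make this a well-defined probability measure.

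\textbf{Absolute continuity on path space.} If $\mathbb{Q}_\theta(A\mid\rho)=0$, then the integral of any nonnegative integrable function over $A$ vanishes, so $\mathbb{Q}_\theta^{\mathrm{FK}}(A\mid\rho)=0$. Equivalently, $W/Z$ is a Radon--Nikodym derivative, and $\mathbb{Q}_\theta^{\mathrm{FK}}\ll\mathbb{Q}_\theta$.

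\textbf{Terminal marginals.} Absolute continuity passes to pushforwards under the measurable projection $\pi_K:X_{0:K}\mapsto X_K$. Take a Borel set $B$ with $p_\theta(B\mid\rho)=\mathbb{Q}_\theta(\pi_K^{-1}(B)\mid\rho)=0$. Then $\mathbb{Q}_\theta^{\mathrm{FK}}(\pi_K^{-1}(B))=0$, which is exactly $\pi_{\theta,\beta_K}(B\mid\rho)=0$. Here the terminal FK marginal is identified with the pushforward of $\mathbb{Q}_\theta^{\mathrm{FK}}$ under $\pi_K$. Because $W$ depends only on $X_K$, this gives the explicit density $\pi_{\theta,\beta_K}(dx)=Z^{-1}e^{-\beta_K f(x)}p_\theta(dx\mid\rho)$, consistent with the form stated in Section~\ref{sec:derivative_free_fkc}. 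This proves \eqref{eq:terminal_absolute_continuity}.

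\textbf{Feasibility.} Take $B=\mathcal X\setminus\mathcal X_\rho$. This requires $\mathcal X_\rho$ to be measurable, which holds since $x\mapsto\mathbb{P}(g(x,\xi)\le 0)$ is measurable under joint measurability of $g$ via Fubini; I would state this as a standing assumption. Then $p_\theta(B\mid\rho)=0$ gives $\pi_{\theta,\beta_K}(B\mid\rho)=0$, that is, $\pi_{\theta,\beta_K}(\mathcal X_\rho\mid\rho)=1$.

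The only delicate point is the measure-theoretic bookkeeping: measurability of $\mathcal X_\rho$, and checking that the telescoped weight, which involves the intermediate denoiser outputs $\widehat X_k$, is a measurable function of the path. Once that is in place, every step is a one-line consequence of the definition of an FK change of measure.
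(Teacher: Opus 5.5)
Your proposal is correct and follows essentially the same route as the paper's proof. Both write $\mathbb{Q}_\theta^{\mathrm{FK}}$ as having the nonnegative density $L_K/Z_K$ with respect to $\mathbb{Q}_\theta(\cdot\mid\rho)$, push the resulting absolute continuity forward through the terminal evaluation map, and take the set $\mathcal X_\rho^c$. The only differences are cosmetic: you invoke the telescoping identity up front, whereas the paper records the explicit terminal density only in a remark after the proof, and you flag the measurability of $\mathcal X_\rho$, which the paper leaves implicit.
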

Proof of Proposition~\ref{prop:fk_feasibility} is given in Appendix~\ref{app:fk_feasibility}. For the objective-only tilt,$ \frac{d\pi_{\theta,\beta_K}(\cdot\mid\rho)}{dp_\theta(\cdot\mid\rho)}(x)=\frac{\exp[-\beta_K f(x)]}{Z_{\theta,\beta_K}(\rho)}$, whenever $0<Z_{\theta,\beta_K}(\rho)<\infty$. Hence FKC only redistributes mass within the support of the frozen prior and cannot create mass on baseline null sets. The condition $p_\theta(\mathcal X_\rho\mid\rho)=1$ is an ideal support assumption and need not hold exactly under the relaxed finite-scenario construction and finite-data diffusion approximation in Section~\ref{sec:objective_free_diffusion}. Proposition~\ref{prop:fk_feasibility} isolates the effect of FK correction by showing that FKC does not introduce support outside that of the frozen prior.

\paragraph{Approximation and optimization error.}
Proposition~\ref{prop:fk_feasibility} shows that the objective-only Feynman--Kac correction preserves feasibility already encoded by the frozen diffusion prior. We next quantify the practical optimization error arising from incomplete coverage of the learned feasible distribution, finite inverse temperature, and finite-particle approximation.

For a fixed risk level $\rho$, define $f_\rho^\star \triangleq \inf_{x\in\mathcal X_\rho} f(x)$, and $f_{\theta,\rho}^\star \triangleq \operatorname*{ess\,inf}_{x\sim p_\theta(\cdot\mid\rho)} f(x)$ is the essential infimum of $f$ taken with respect to $p_\theta(\cdot\mid\rho)$. We further define the learned-prior coverage gap
\begin{equation*}
    \Delta_{\theta,\rho}
    \triangleq
    f_{\theta,\rho}^\star-f_\rho^\star.
    \label{eq:learned_coverage_gap}
\end{equation*}
Under almost-sure feasibility of the frozen diffusion, $\Delta_{\theta,\rho}\geq0$. In particular, $\Delta_{\theta,\rho}=0$ whenever the learned prior assigns positive probability to every arbitrarily near-optimal feasible objective sublevel. For any $\epsilon>0$, define the near-optimal mass of the learned prior by
\begin{equation}
    a_{\theta,\epsilon}
    \triangleq
    p_\theta\!\left(
        f(X)
        \leq
        f_{\theta,\rho}^\star+\epsilon
        \,\middle|\,\rho
    \right).
    \label{eq:near_optimal_mass_theta}
\end{equation}
By the definition of the essential infimum, $a_{\theta,\epsilon}>0$ for every $\epsilon>0$. Let $\eta_K$ denote the exact normalized terminal FK marginal and $\eta_K^M=\sum_{m=1}^M w_K^{(m)}\delta_{X_K^{(m)}}$ its normalized particle approximation.
\begin{assumption}[Regularity for practical error analysis]
\label{ass:practical_error}
Fix $\rho\in(0,1)$. Assume $p_\theta(\mathcal X_\rho\mid\rho)=1$, $f_{\theta,\rho}^\star>-\infty$. For every fixed finite inverse-temperature schedule $0=\beta_0\leq\cdots\leq\beta_K<\infty$, assume that the associated FK model is well defined and that its $M$-particle approximation satisfies $( \mathbb E[ | \eta_K^M(f)-\eta_K(f) |^2 ] )^{1/2} \leq C_{K,f}/\sqrt M$ for some finite $C_{K,f}$ independent of $M$.
\end{assumption}
The first condition is the feasibility premise of Proposition~\ref{prop:fk_feasibility}. The particle-error condition assumes the standard fixed-horizon $O(M^{-1/2})$ SMC rate, which holds for bounded observables and extends to unbounded test functions under suitable moment and stability  conditions~\citep{delmoral2004feynman,hu2011general,whiteley2012sequential}. Importantly, no differentiability, smoothness, or Hessian information of the post-specified objective $f$ is required.

\begin{proposition}[Practical optimization error]
\label{prop:optimization_error}
Under Assumption~\ref{ass:practical_error}, for every finite $\beta_K>0$, there exists a finite constant $C_{K,f}$, independent of $M$, such that
\begin{equation}
\label{eq:practical_optimization_error}
    \left|\mathbb E[\eta_K^M(f)]-f_\rho^\star\right|\leq\underbrace{\Delta_{\theta,\rho}}_{\text{coverage error}}+\underbrace{\frac{C_{K,f}}{\sqrt M}}_{\text{finite particles}}+\underbrace{\inf_{\epsilon>0}\left\{\epsilon+\frac{1}{\beta_K}\log\frac{1}{a_{\theta,\epsilon}}\right\}}_{\text{finite inverse temperature}}.
\end{equation}
\end{proposition}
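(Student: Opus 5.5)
The argument is a three-term triangle inequality. Write $\eta_K=\pi_{\theta,\beta_K}(\cdot\mid\rho)$; by the telescoping of the potentials in \eqref{eq:annealed_fk_potential} this is the exact terminal FK marginal. Then
\[
\bigl|\mathbb E[\eta_K^M(f)]-f_\rho^\star\bigr|\le \bigl|\mathbb E[\eta_K^M(f)]-\eta_K(f)\bigr| + \bigl(\eta_K(f)-f_{\theta,\rho}^\star\bigr) + \bigl(f_{\theta,\rho}^\star-f_\rho^\star\bigr).
\]
Each term on the right is bounded separately, as follows.

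\textbf{Coverage term.} The last term equals $\Delta_{\theta,\rho}$ by definition. It is nonnegative because, under Assumption~\ref{ass:practical_error}, $p_\theta(\mathcal X_\rho\mid\rho)=1$. Hence $p_\theta$-almost every $x$ lies in $\mathcal X_\rho$ and satisfies $f(x)\ge f_\rho^\star$, so the essential infimum $f_{\theta,\rho}^\star$ is at least $f_\rho^\star$.

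\textbf{Particle term.} By Jensen's inequality,
\[
\bigl|\mathbb E[\eta_K^M(f)-\eta_K(f)]\bigr|\le \bigl(\mathbb E\bigl[|\eta_K^M(f)-\eta_K(f)|^2\bigr]\bigr)^{1/2}\le C_{K,f}/\sqrt M,
\]
which is exactly the rate assumed in Assumption~\ref{ass:practical_error}. The middle term is nonnegative: by Proposition~\ref{prop:fk_feasibility}, $\eta_K\ll p_\theta$, so $f\ge f_{\theta,\rho}^\star$ holds $\eta_K$-almost surely.

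\textbf{Temperature term (main step).} The remaining task is the Gibbs concentration bound $\eta_K(f)-f_{\theta,\rho}^\star\le \epsilon+\beta_K^{-1}\log(1/a_{\theta,\epsilon})$ for every $\epsilon>0$; taking the infimum over $\epsilon$ then gives the claim. I would derive it from the Donsker--Varadhan (Gibbs) variational principle. Let $g=f-f_{\theta,\rho}^\star\ge 0$ ($p_\theta$-a.s.) and $\beta=\beta_K$. The tilted measure $\eta_K\propto e^{-\beta g}p_\theta$ minimizes $\mu(g)+\beta^{-1}\mathrm{KL}(\mu\|p_\theta)$ over probability measures $\mu$, and the minimum value is $-\beta^{-1}\log p_\theta(e^{-\beta g})$. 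Since the KL term is nonnegative,
\[
\eta_K(g)\le -\beta^{-1}\log p_\theta(e^{-\beta g}).
\]
Restricting the expectation to the sublevel set $S_\epsilon=\{g\le\epsilon\}$, which has $p_\theta$-mass $a_{\theta,\epsilon}>0$, gives $p_\theta(e^{-\beta g})\ge a_{\theta,\epsilon}e^{-\beta\epsilon}$. Combining the two yields $\eta_K(g)\le\epsilon+\beta^{-1}\log(1/a_{\theta,\epsilon})$.

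The expected obstacle is justifying the variational step when $f$ is unbounded above: one needs $\eta_K(g)<\infty$, otherwise the Gibbs identity fails. This finiteness is implicit in the well-definedness clause of Assumption~\ref{ass:practical_error} together with the finiteness of $C_{K,f}$, which presupposes $\eta_K(f)$ exists. Given that, the elementary inequality can be checked directly by a Jensen argument: compare $\eta_K$ with $p_\theta(\cdot\mid S_\epsilon)$ via $\mathrm{KL}(\eta_K\|p_\theta)\ge0$, which is equivalent to the variational bound above.
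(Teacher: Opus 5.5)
Your proposal is correct and follows essentially the same route as the paper: the same three-term triangle inequality, Jensen for the particle term, absolute continuity from Proposition~\ref{prop:fk_feasibility} to drop the absolute value on the temperature term, and the bound $\eta_K(f)\le-\beta_K^{-1}\log Z_{\theta,\beta_K}$ from $\mathrm{KL}(\eta_K\Vert p_\theta)\ge0$, combined with lower-bounding the normalizing constant on the sublevel set of mass $a_{\theta,\epsilon}$. The finiteness worry you raise is in fact automatic and needs no appeal to the assumption: since $g\ge0$ holds $p_\theta$-a.s. and $te^{-\beta t}\le 1/(e\beta)$ for $t\ge0$, you get $\eta_K(g)\le (e\beta)^{-1}/(a_{\theta,\epsilon}e^{-\beta\epsilon})<\infty$ for every $\beta>0$.
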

Proof can be found in Appendix~\ref{app:optimization_error}. In practice, the SMC procedure returns a weighted terminal particle population, from which we output the one with the smallest objective value. Since $f(\widehat X_K^M):=\min_{1\le m\le M}f(X_K^{(m)})\leq \eta_K^M(f)$, the bound on $\mathbb E[\eta_K^M(f)]$ also provides an upper bound on the expected objective gap of the returned best particle. Moreover,
\begin{equation}
    \limsup_{\beta_K\rightarrow\infty}
    \limsup_{M\rightarrow\infty}
    \left|\mathbb E[\eta_K^M(f)]-f_\rho^\star
    \right|
    \leq
    \Delta_{\theta,\rho}.
    \label{eq:asymptotic_coverage_gap}
\end{equation}
In particular, when $\Delta_{\theta,\rho}=0$, the infinite-particle limit followed by the low-temperature limit recovers the global chance-constrained optimal value in expectation. The analysis experiments are presented in Appendix~\ref{app:ablation}.

\section{Experiments}
We evaluate D$^3$Opt on linear Gaussian optimization (Section~\ref{sec:linear_gaussian}), objective transfer to smooth and non-smooth objectives (Section~\ref{sec:linear_gaussian_generalization}), and economic dispatch with valve-point effects (Section~\ref{sec:cced_vpe}). Unless stated otherwise, stochastic methods are evaluated over $100$ runs. D$^3$Opt uses a U-Net score model~\citep{ronneberger2015u} and stochastic DDIM sampling~\citep{song2021denoising}, with the DDIM stochasticity parameter fixed at $\eta=1$ throughout the reported experiments. Implementation details and robust waveform experiments are provided in Appendices~\ref{app:implementation} and~\ref{app:robust_waveform}. All experiments use one NVIDIA RTX 5090 GPU.

\subsection{Linear Gaussian Chance-Constrained Optimization}
\label{sec:linear_gaussian}
We first consider the standard linear Gaussian chance-constrained optimization problem:
\begin{align}
\min_{\mathbf{x}\in\mathbb{R}^{n}}\quad
 \frac{1}{2}\mathbf{x}^{\top}\mathbf{x}
+\mathbf{b}^{\top}\mathbf{x},
\qquad
\text{s.t.}\quad
 \mathbb{P}_{\mathbf{c}}
\left\{
\mathbf{c}^{\top}\mathbf{x}+d\geq 0
\right\}
\geq 1-\rho, \label{eq:lg_constraint}
\end{align}
where $\mathbf{c}\sim\mathcal{N}(\bar{\mathbf{c}},\mathbf{I})$ and $\rho\in(0,1)$ denotes the prescribed violation-probability limit. Following~\cite{zhang2025ggdopt}, we set $\mathbf{b}=\bar{\mathbf{c}}=\mathbf{1}$, $d=1$, $n=8$, and $\rho=0.1$. The problem can be reformulated as second-order conic (SOC) problem, so we use the CVX~\citep{grant2008cvx} as the global reference. And we also compare our method with different types of SAA methods. We additionally evaluate GGDOpt First-order/Second-order~\citep{zhang2025ggdopt} and DiffOPT~\citep{kong2025constrained}. For DiffOPT, we consistently use its iterative importance-sampling variant throughout our experiments, including settings with differentiable objectives, so that objective access remains value-only.
\begin{table}[t!]
\centering
\caption{Comparison on the linear Gaussian chance-constrained problem~\eqref{eq:lg_constraint} with $n=8$ and $\rho=0.1$. Lower objective values are better.}
\label{tab1}
\resizebox{\textwidth}{!}{\begin{tabular}{lcccc}
\toprule
Method & Repeat & FvalMean $\downarrow$ & FvalStd $\downarrow$ & FvalMedian $\downarrow$ \\
\midrule
SOC-CVX~\citep{grant2008cvx}~\textbf{(Global Reference)}
& 1 & $-0.6586$ & $0$ & $-0.6586$ \\
\midrule
SAA-CVaR~\citep{nemirovski2007convex}
& 100 & $-0.5893$ & $0.0248$ & $-0.5869$ \\
SAA-MIP~\citep{pagnoncelli2009sample}
& 100 & $-0.6281$ & $0.0157$ & $-0.6318$ \\
SAA-PDCA~\citep{wang2023proximal}
& 100 & $-0.6389$ & $0.0314$ & $-0.6408$ \\
SAA-SNSCO~\citep{zhou2024zeroone}
& 100 & $0.8051$ & $3.4014$ & $-0.6371$ \\
\midrule
GGDOpt First-order~\citep{zhang2025ggdopt}
& 100 & $-0.6483$ & $0.0051$ & $-0.6488$ \\
GGDOpt Second-order~\citep{zhang2025ggdopt}
& 100 & $-0.6491$ & $0.0056$ & $-0.6503$ \\
DiffOPT ~\citep{kong2025constrained}
& 100 & $-0.5614$ & $0.0252$ & $-0.5608$ \\
\midrule
\textbf{D$^3$Opt}\ w/o FKC & 100 & $-0.1172$ & $0.0711$ & $-0.1247$  \\
\textbf{D$^3$Opt}
& 100 & $\mathbf{-0.6519}$ & $\mathbf{0.0015}$ & $\mathbf{-0.6517}$ \\
\bottomrule
\end{tabular}}
\end{table}

\textbf{Results.} On the linear Gaussian benchmark, Table~\ref{tab1} shows that the frozen constraint-only prior leaves substantial room for objective improvement. Without FKC, the sampler returns a mean objective value of $-0.1172$. Adding objective-value correction lowers it to $-0.6519$, only $0.0067$ above the exact SOC optimum $-0.6586$. D$^3$Opt also attains the lowest mean and median and the smallest standard deviation $0.0015$ among the evaluated diffusion methods. These gains are achieved at inference without objective gradients or objective-specific retraining.

\subsection{Objective Generalization under a Linear Gaussian Chance Constraint}
\label{sec:linear_gaussian_generalization}
We next evaluate objective generalization by reusing the same frozen constraint-only diffusion checkpoint from Section~\ref{sec:linear_gaussian} for objectives specified only after training. We consider a smooth anisotropic shifted quadratic (ASQ) objective and a non-convex, non-smooth four-well (MW4) objective, with respect to the same constraint as \eqref{eq:lg_constraint}. 

\textbf{Anisotropic shifted quadratic (ASQ) objective.} The ASQ objective can be described as:
\begin{equation} 
f_{\mathrm{ASQ}}(\mathbf{x})=\frac{1}{2}(\mathbf{x}-\boldsymbol{\mu})^{\top}\mathbf{Q}(\mathbf{x}-\boldsymbol{\mu}),\qquad \mathbf{Q}\succ 0  \label{eq:asq_objective} 
\end{equation}
The numerical parameters of \(Q\) and \(\mu\) are provided in Appendix~\ref{app:problem_setting}. Since $\mathbf{Q}\succ0$, the ASQ objective is smooth and strictly convex. Compared with the standard quadratic objective in Section~\ref{sec:linear_gaussian}, it changes both the unconstrained minimizer and the directional curvature while leaving the chance constraint unchanged. ASQ therefore provides a controlled test of whether the frozen constraint-only diffusion prior can adapt to a new smooth convex objective introduced only at inference time.
\begin{table}[t]
\centering
\caption{Objective-transfer results on ASQ and MW4 under the same linear Gaussian chance constraint. Each stochastic method is evaluated over 100 independent runs. Lower objective values and gaps are better.}
\label{tab:linear_transfer_results}
\resizebox{\textwidth}{!}{\begin{tabular}{llcccc}
\toprule
Task & Method & FvalMean $\downarrow$ & FvalStd $\downarrow$ & FvalBest $\downarrow$ & Optimality Gap (\%) $\downarrow$ \\
\midrule
\multirow{6}{*}{ASQ} 
& SOC-CVX~\citep{grant2008cvx} 
& 0.0979 & -- & 0.0979 & 0.0000 \\
\cmidrule(lr){2-6}
& GGDOpt~\citep{zhang2025ggdopt} First-order ($\beta=0.1$) 
& 0.2026 & 0.0630 & 0.1055 & 106.9530 \\
& GGDOpt~\citep{zhang2025ggdopt} Second-order ($\beta=0.1,\sigma=0.1$) 
& 0.1994 & 0.0625 & 0.1055 & 103.6460 \\
& DiffOPT~\citep{kong2025constrained} 
& 0.1313 & 0.0081 & 0.1092 & 34.0980 \\
& \textbf{D$^3$Opt}\ w/o FKC
& 0.8549 & 0.2086 & 0.3209 & 773.1010 \\
& \textbf{D$^3$Opt} 
& $\mathbf{0.1035}$ & $\mathbf{0.0018}$ & $\mathbf{0.0999}$ & $\mathbf{5.7040}$ \\
\midrule[1.5pt]
\multirow{6}{*}{MW4} 
& SOC-CVX~\citep{grant2008cvx} (4-branch enumeration) 
& 0.1762 & -- & 0.1762 & 0.0000 \\
\cmidrule(lr){2-6}
& GGDOpt~\citep{zhang2025ggdopt} First-order ($\beta=0.1$) 
& 0.3132 & 0.0626 & 0.2168 & 77.7470 \\
& GGDOpt~\citep{zhang2025ggdopt} Second-order ($\beta=0.1,\sigma=0.1$) 
& 0.3123 & 0.0639 & 0.2159 & 77.2630 \\
& DiffOPT~\citep{kong2025constrained} 
& 0.2233 & 0.0132 & 0.1901 & 26.7250 \\
& \textbf{D$^3$Opt}\ w/o FKC
& 0.7856 & 0.1518 & 0.4425 & 345.9200 \\
& \textbf{D$^3$Opt} 
& $\mathbf{0.1834}$ & $\mathbf{0.0018}$ & $\mathbf{0.1798}$ & $\mathbf{4.0780}$ \\
\bottomrule
\end{tabular}}
\end{table}
\textbf{Non-convex and non-smooth four-well~(MW4) objective.} We further consider a four-well objective defined as the pointwise minimum of four positive-definite quadratic branches:
\begin{equation}
f_{\mathrm{MW4}}(\mathbf{x}) = \min_{m\in\{1,\ldots,4\}} f_m(\mathbf{x}),
\qquad
f_m(\mathbf{x}) = \frac{1}{2}
(\mathbf{x}-\boldsymbol{\mu}_m)^{\top}
\mathbf{Q}_m
(\mathbf{x}-\boldsymbol{\mu}_m)
+\alpha_m .
\label{eq:mw4_objective}
\end{equation}
Each branch is strictly convex, whereas their pointwise minimum forms a multi-well non-convex objective with non-smooth branch-switching boundaries. Branch parameters are in Appendix~\ref{app:problem_setting}.

We compare against SOC-CVX, GGDOpt First-order and Second-order~\citep{zhang2025ggdopt}, DiffOPT~\citep{kong2025constrained}, and \textbf{D$^3$Opt}\ w/o FKC. We continue to use SOC-CVX to obtain the global optimal reference values. For MW4, the reference is obtained by solving the four convex quadratic branches separately and taking the minimum.

\textbf{Results.} Having learned the Gaussian chance-feasible structure once, D$^3$Opt applies the same frozen model to ASQ and MW4 using only inference-time objective evaluations. As shown in Table~\ref{tab:linear_transfer_results}, FKC lowers mean objectives from $0.8549$ to $0.1035$ on ASQ and from $0.7856$ to $0.1834$ on MW4. The reported gaps to exact SOC references are $5.704\%$ and $4.078\%$, versus DiffOPT's $34.098\%$ and $26.725\%$ under the reported protocol. D$^3$Opt also has the smallest standard deviation $0.0018$ among learning-based methods on both tasks. These results demonstrate adaptation to smooth convex and non-smooth nonconvex objectives without retraining.

\subsection{Chance-Constrained Economic Dispatch with Valve-Point Effects}
\label{sec:cced_vpe}
We next evaluate D$^3$Opt on chance-constrained economic dispatch (CCED) with uncertain wind generation~\citep{wang2017cced,yang2021cced}. Valve-point effects (VPE) introduce a non-smooth generation cost~\citep{attaviriyanupap2002hybrid}, providing a practical test of derivative-free optimization under chance constraints. The problem is formulated as:
\begin{align}
\centering
\min_{\mathbf{p}}\quad & C(\mathbf{p})=\sum_{i=1}^{N_G}\left[a_i+b_ip_i+c_ip_i^2+d_i\left|\sin\!\left(e_i(P_i^{\min}-p_i)\right)\right|\right], \label{eq:cced_vpe}\\
\text{s.t.}\quad & P_i^{\min}\leq p_i\leq P_i^{\max},\; i=1,\ldots,N_G, \qquad \mathbb{P}_{W}\!\left\{\sum_{i=1}^{N_G}p_i+W\geq P_D\right\}\geq1-\rho.
\end{align}
We consider a ten-unit system with $P_D=2000$ MW, $W=100Z$ MW, $Z\sim\mathrm{Beta}(5,5)$, and $\rho=0.10$. The generator parameters and additional evaluation details are provided in Appendix~\ref{app:problem_setting}.

We compare against DE~\citep{balamurugan2008de} and PWL-MILP~\citep{pan2020milp}, both with exact-feasibility projection, and a 95\% CI-SAA-DE baseline combining confidence-bound SAA~\citep{pagnoncelli2009sample} with DE. The projected DE solution serves as a numerical reference. Learning-based baselines include \textbf{D$^3$Opt}\ w/o FKC, GGDOpt~\citep{zhang2025ggdopt}, and DiffOPT~\citep{kong2025constrained}. Since GGDOpt requires derivatives, we evaluate a VPE first-order adaptation using branchwise derivatives, with $\operatorname{sign}(0)=0$ at non-differentiable points.

\begin{table}[t]
\centering
\caption{CCED-VPE-10 results at $\rho=0.10$ over $100$ runs, except PWL-MILP (one solve). Mean Probability is the average analytic satisfaction probability.}
\label{tab:cced_results}
\resizebox{\textwidth}{!}{\begin{tabular}{lcccc}
\toprule
Method & Cost Mean $\downarrow$ & Cost Std $\downarrow$ & Cost Best $\downarrow$ & Mean Probability \\
\midrule
 DE + exact projection & $104399.055$ & $5.2\times10^{-8}$ & $104399.055$ & $0.9000$ \\
 PWL-MILP + exact projection & $104399.103$ & -- & $104399.103$ & $0.9000$ \\
 95\% CI-SAA-DE & $104400.092$ & $\approx 0$ & $104400.092$ & $0.9003$ \\
\midrule
 GGDOpt~\citep{zhang2025ggdopt} (Vanilla) & \multicolumn{4}{c}{Not directly applicable} \\
 GGDOpt~\citep{zhang2025ggdopt} + VPE adaptation & $108821.619$ & $1389.720$ & $106540.523$ & $0.9522$ \\
 DiffOPT~\citep{kong2025constrained} & $105678.450$ & $163.418$ & $105240.143$ & $0.9210$ \\
 \midrule
\textbf{D$^3$Opt} \ w/o FKC & $107183.587$ & $268.987$ & $106146.715$ & $0.9518$ \\
 \textbf{D$^3$Opt} & $\mathbf{104823.577}$ & $\mathbf{102.065}$ & $\mathbf{104582.444}$ & $0.9123$ \\
\bottomrule
\end{tabular}}
\end{table}
\textbf{Results.} Table~\ref{tab:cced_results} shows that FKC reduces D$^3$Opt's mean cost from $107183.587$ to $104823.577$ ($2.20\%$) and its cost standard deviation by $62.1\%$ relative to the w/o FKC ablation. D$^3$Opt achieves the lowest mean and best costs and the smallest standard deviation among the evaluated learning-based methods; its mean cost is $0.81\%$ below DiffOPT. Its mean analytic satisfaction probability is $0.9123$, above the $0.90$ requirement. Using only objective evaluations for the non-smooth valve-point cost, D$^3$Opt remains within $0.407\%$ of the projected DE numerical reference.

\section{Conclusion and Limitations}

We presented a derivative-free diffusion framework for chance-constrained optimization that separates constraint-distribution learning from objective optimization. A risk-conditioned diffusion model is trained using constraint-only data and reused without retraining, while path-space FKC introduces post-specified objectives through function evaluations during reverse diffusion. Our analysis characterizes the terminal Boltzmann tilt, establishes feasibility preservation when the frozen prior is almost surely feasible, and separates optimization error into prior coverage, finite-particle approximation, and finite-temperature contributions. Experiments demonstrate objective transfer across smooth and non-smooth objectives under a fixed linear Gaussian chance constraint, with further results on valve-point economic dispatch, risk-level adaptation, and robust waveform design.

The method depends on the coverage of the learned prior: FK reweighting cannot reach regions outside its support, and relaxed empirical data acceptance does not guarantee exact feasibility. Our theoretical result bounds the raw best particle's expected objective gap; a screened or projected output requires separate analysis. The experimental comparisons use different inference budgets, and specialized solvers achieve lower costs on the analytically tractable CCED instance. Future work will improve prior coverage and particle efficiency and address higher-dimensional problems with less tractable feasibility checks.
\bibliography{iclr2027_conference}

@book{delmoral2004feynman,
  title     = {Feynman--Kac Formulae:
               Genealogical and Interacting Particle Systems with Applications},
  author    = {Del Moral, Pierre},
  publisher = {Springer},
  year      = {2004}
}

@misc{grant2008cvx,
  author = {Grant, Michael and Boyd, Stephen and Ye, Yinyu},
  title  = {{CVX}: Matlab Software for Disciplined Convex Programming},
  year   = {2008}
}

@article{nemirovski2007convex,
  author  = {Nemirovski, Arkadi and Shapiro, Alexander},
  title   = {Convex Approximations of Chance Constrained Programs},
  journal = {SIAM Journal on Optimization},
  volume  = {17},
  number  = {4},
  pages   = {969--996},
  year    = {2007},
  doi     = {10.1137/050622328}
}

@article{attaviriyanupap2002hybrid,
  title={A hybrid EP and SQP for dynamic economic dispatch with nonsmooth fuel cost function},
  author={Attaviriyanupap, Pathom and Kita, Hiroyuki and Tanaka, Eiichi and Hasegawa, Jun},
  journal={IEEE Transactions on Power Systems},
  volume={17},
  number={2},
  pages={411--416},
  year={2002},
  doi={10.1109/TPWRS.2002.1007911}
}

@article{wang2017cced,
  title={Chance-constrained economic dispatch with non-Gaussian correlated wind power uncertainty},
  author={Wang, Zhiwen and Shen, Chen and Liu, Feng and Wu, Xiangyu and Liu, Cheng and Gao, Feng},
  journal={IEEE Transactions on Power Systems},
  volume={32},
  number={6},
  pages={4880--4893},
  year={2017},
  doi={10.1109/TPWRS.2017.2672750}
}

@article{yang2021cced,
  title={Chance-Constrained Economic Dispatch Considering Curtailment Strategy of Renewable Energy},
  author={Yang, Yue and Wu, Wenchuan and Wang, Bin and Li, Mingjie},
  journal={IEEE Transactions on Power Systems},
  volume={36},
  number={6},
  pages={5792--5802},
  year={2021},
  doi={10.1109/TPWRS.2021.3081120}
}

@inproceedings{zhang2025ggdopt,
  title     = {A Gradient Guided Diffusion Framework for Chance Constrained Programming},
  author    = {Zhang, Boyang and Wang, Zhiguo and Liu, Ya-Feng},
  booktitle = {Advances in Neural Information Processing Systems},
  volume    = {38},
  year      = {2025},
  doi       = {10.52202/085713-4939}
}

@article{pagnoncelli2009sample,
  author  = {Pagnoncelli, Bernardo K. and Ahmed, Shabbir and Shapiro, Alexander},
  title   = {Sample Average Approximation Method for Chance Constrained Programming:
             Theory and Applications},
  journal = {Journal of Optimization Theory and Applications},
  volume  = {142},
  number  = {2},
  pages   = {399--416},
  year    = {2009},
  doi     = {10.1007/s10957-009-9523-6}
}

@article{wang2023proximal,
  author  = {Wang, Peng and Jiang, Rujun and Kong, Qingyuan and Balzano, Laura},
  title   = {A Proximal {DC} Algorithm for Sample Average Approximation of
             Chance Constrained Programming},
  journal = {arXiv preprint arXiv:2301.00423},
  year    = {2023}
}

@article{zhou2024zeroone,
  author  = {Zhou, Shenglong and Pan, Lili and Xiu, Naihua and Li, Geoffrey Ye},
  title   = {A 0/1 Constrained Optimization Solving Sample Average Approximation
             for Chance Constrained Programming},
  journal = {Mathematics of Operations Research},
  year    = {2024},
  doi     = {10.1287/moor.2023.0149}
}

@book{ethier1986markov,
  title     = {Markov Processes: Characterization and Convergence},
  author    = {Ethier, Stewart N. and Kurtz, Thomas G.},
  publisher = {Wiley},
  year      = {1986}
}

@book{lelievre2010free,
  title     = {Free Energy Computations: A Mathematical Perspective},
  author    = {Leli{\`e}vre, Tony and Rousset, Mathias and Stoltz, Gabriel},
  publisher = {Imperial College Press},
  year      = {2010}
}

@article{naesseth2019smc,
  title   = {Elements of Sequential Monte Carlo},
  author  = {Naesseth, Christian A. and Lindsten, Fredrik and Sch{\"o}n, Thomas B.},
  journal = {Foundations and Trends in Machine Learning},
  volume  = {12},
  number  = {3},
  year    = {2019}
}

@article{whiteley2012sequential,
  title   = {Sequential Monte Carlo Samplers: Error Bounds and
             Insensitivity to Initial Conditions},
  author  = {Whiteley, Nick P.},
  journal = {Stochastic Analysis and Applications},
  volume  = {30},
  number  = {5},
  pages   = {774--798},
  year    = {2012},
  doi     = {10.1080/07362994.2012.684323}
}

@article{hu2011general,
  title   = {A General Convergence Result for Particle Filtering},
  author  = {Hu, Xiao-Li and Sch{\"o}n, Thomas B. and Ljung, Lennart},
  journal = {IEEE Transactions on Signal Processing},
  volume  = {59},
  number  = {7},
  pages   = {3424--3429},
  year    = {2011},
  doi     = {10.1109/TSP.2011.2135349}
}

@inproceedings{skreta2025fkc,
  title     = {Feynman--Kac Correctors in Diffusion:
               Annealing, Guidance, and Product of Experts},
  author    = {Skreta, Marta and Akhound-Sadegh, Tara and Ohanesian, Viktor
               and Bondesan, Roberto and Aspuru-Guzik, Alan and Doucet, Arnaud
               and Brekelmans, Rob and Tong, Alexander and Neklyudov, Kirill},
  booktitle = {Proceedings of the 42nd International Conference on Machine Learning},
  year      = {2025}
}

@inproceedings{singhal2025fksteering,
  title     = {A General Framework for Inference-time Scaling
               and Steering of Diffusion Models},
  author    = {Singhal, Raghav and Horvitz, Zachary and Teehan, Ryan
               and Ren, Mengye and Yu, Zhou and McKeown, Kathleen
               and Ranganath, Rajesh},
  booktitle = {Proceedings of the 42nd International Conference on Machine Learning},
  year      = {2025}
}

@inproceedings{ren2026driftlite,
  title     = {DriftLite: Lightweight Drift Control for
               Inference-Time Scaling of Diffusion Models},
  author    = {Ren, Yinuo and Gao, Wenhao and Ying, Lexing
               and Rotskoff, Grant M. and Han, Jiequn},
  booktitle = {International Conference on Learning Representations},
  year      = {2026}
}

@inproceedings{ou2026smc,
  title     = {Inference-Time Scaling of Discrete Diffusion Models
               via Importance Weighting and Optimal Proposal Design},
  author    = {Ou, Zijing and Pani, Chinmay and Li, Yingzhen},
  booktitle = {International Conference on Learning Representations},
  year      = {2026}
}

@inproceedings{kong2025constrained,
  title     = {Diffusion Models as Constrained Samplers for Optimization
               with Unknown Constraints},
  author    = {Kong, Lingkai and Du, Yuanqi and Mu, Wenhao and
               Neklyudov, Kirill and De Bortoli, Valentin and
               Wu, Dongxia and Wang, Haorui and Ferber, Aaron M.
               and Ma, Yian and Gomes, Carla P. and Zhang, Chao},
  booktitle = {Proceedings of the 28th International Conference on
               Artificial Intelligence and Statistics},
  series    = {Proceedings of Machine Learning Research},
  volume    = {258},
  pages     = {4582--4590},
  year      = {2025},
  publisher = {PMLR}
}

@inproceedings{song2021scorebased,
  title     = {Score-Based Generative Modeling through Stochastic Differential Equations},
  author    = {Song, Yang and Sohl-Dickstein, Jascha and Kingma, Diederik P.
               and Kumar, Abhishek and Ermon, Stefano and Poole, Ben},
  booktitle = {International Conference on Learning Representations},
  year      = {2021},
  url       = {https://openreview.net/forum?id=PxTIG12RRHS}
}

@inproceedings{li2025svdd,
  title     = {Derivative-Free Guidance in Continuous and Discrete
               Diffusion Models with Soft Value-based Decoding},
  author    = {Li, Xiner and Zhao, Yulai and Wang, Chenyu and
               Scalia, Gabriele and Eraslan, Gokcen and
               Nair, Surag and Biancalani, Tommaso and
               Ji, Shuiwang and Regev, Aviv and Levine, Sergey and
               Uehara, Masatoshi},
  booktitle = {Advances in Neural Information Processing Systems},
  volume    = {38},
  pages     = {105884--105922},
  year      = {2025}
}

@inproceedings{kim2026lookahead,
  title     = {Lookahead Sample Reward Guidance for Test-Time
               Scaling of Diffusion Models},
  author    = {Kim, Yeongmin and Shin, Donghyeok and Na, Byeonghu and
               Park, Minsang and Kim, Richard Lee and Moon, Il-Chul},
  booktitle = {International Conference on Machine Learning},
  year      = {2026}
}

@article{storn1997differential,
  title   = {Differential Evolution -- A Simple and Efficient Heuristic
             for Global Optimization over Continuous Spaces},
  author  = {Storn, Rainer and Price, Kenneth},
  journal = {Journal of Global Optimization},
  volume  = {11},
  number  = {4},
  pages   = {341--359},
  year    = {1997},
  doi     = {10.1023/A:1008202821328}
}

@article{audet2006mads,
  title   = {Mesh Adaptive Direct Search Algorithms
             for Constrained Optimization},
  author  = {Audet, Charles and Dennis, Jr., John E.},
  journal = {SIAM Journal on Optimization},
  volume  = {17},
  number  = {1},
  pages   = {188--217},
  year    = {2006},
  doi     = {10.1137/040603371}
}

@inproceedings{song2021denoising,
  title     = {Denoising Diffusion Implicit Models},
  author    = {Song, Jiaming and Meng, Chenlin and Ermon, Stefano},
  booktitle = {International Conference on Learning Representations},
  year      = {2021}
}

@article{guesmi2020chaotic,
  title={Chaotic sine--cosine algorithm for chance-constrained economic emission dispatch problem including wind energy},
  author={Guesmi, Tawfik and Farah, Anouar and Marouani, Ismail and Alshammari, Badr and {Hadj Abdallah}, Hsan},
  journal={IET Renewable Power Generation},
  volume={14},
  number={10},
  pages={1808--1821},
  year={2020},
  doi={10.1049/iet-rpg.2019.1081}
}

@inproceedings{ronneberger2015u,
  title={U-net: Convolutional networks for biomedical image segmentation},
  author={Ronneberger, Olaf and Fischer, Philipp and Brox, Thomas},
  booktitle={International Conference on Medical image computing and computer-assisted intervention},
  pages={234--241},
  year={2015},
  organization={Springer}
}

@article{masutomi2013evolutionary,
  title={An Evolutionary Algorithm for Black-Box Chance-Constrained Function Optimization},
  author={Masutomi, Kazuyuki and Nagata, Yuichi and Ono, Isao},
  journal={Journal of Advanced Computational Intelligence and Intelligent Informatics},
  volume={17},
  number={2},
  pages={272--282},
  year={2013},
  doi={10.20965/jaciii.2013.p0272}
}

@article{chen2024opportunities,
  title={Opportunities and challenges of diffusion models for generative AI},
  author={Chen, Minshuo and Mei, Song and Fan, Jianqing and Wang, Mengdi},
  journal={National Science Review},
  volume={11},
  number={12},
  pages={nwae348},
  year={2024},
  doi={10.1093/nsr/nwae348}
}

@article{ding2025exploring,
  title={Exploring the Boundary of Diffusion-based Methods for Solving Constrained Optimization},
  author={Ding, Shutong and Zhou, Yimiao and Hu, Ke and Yao, Xi and Yan, Junchi and Tang, Xiaoying and Shi, Ye},
  journal={arXiv preprint arXiv:2502.10330},
  year={2025}
}

@article{ren2026stochastic,
  title={Stochastic chance-constrained multi-objective optimization of interconnected multi-microgrids: Diverse energy storage configurations and interaction scenarios},
  author={Ren, Xin-Yu and Wang, Zhi-Hua},
  journal={Energy},
  pages={141324},
  year={2026},
  publisher={Elsevier}
}

@article{cao2013chance,
  title   = {Chance-Constrained Optimization-Based Unbalanced Optimal
             Power Flow for Radial Distribution Networks},
  author  = {Cao, Yijia and Tan, Yi and Li, Canbing and Rehtanz, Christian},
  journal = {IEEE Transactions on Power Delivery},
  volume  = {28},
  number  = {3},
  pages   = {1855--1864},
  year    = {2013},
  doi     = {10.1109/TPWRD.2013.2259509}
}

@article{zhang2023new,
  title={New valid inequalities and formulations for the static joint Chance-constrained Lot-sizing problem: Z. Zhang et al.},
  author={Zhang, Zeyang and Gao, Chuanhou and Luedtke, James},
  journal={Mathematical Programming},
  volume={199},
  number={1},
  pages={639--669},
  year={2023},
  publisher={Springer}
}

@article{dinh2018exact,
  title={Exact algorithms for the chance-constrained vehicle routing problem},
  author={Dinh, Thai and Fukasawa, Ricardo and Luedtke, James},
  journal={Mathematical Programming},
  volume={172},
  number={1},
  pages={105--138},
  year={2018},
  publisher={Springer}
}

@article{bienstock2014chance,
  title={Chance-constrained optimal power flow: Risk-aware network control under uncertainty},
  author={Bienstock, Daniel and Chertkov, Michael and Harnett, Sean},
  journal={Siam Review},
  volume={56},
  number={3},
  pages={461--495},
  year={2014},
  publisher={SIAM}
}

@article{balamurugan2008de,
  title={Differential Evolution-based Dynamic Economic Dispatch of Generating Units with Valve-point Effects},
  author={Balamurugan, R. and Subramanian, S.},
  journal={Electric Power Components and Systems},
  volume={36},
  number={8},
  pages={828--843},
  year={2008},
  doi={10.1080/15325000801911427}
}

@article{pan2020milp,
  title={A full mixed-integer linear programming formulation for economic dispatch with valve-point effects, transmission loss and prohibited operating zones},
  author={Pan, Shanshan and Jian, Jinbao and Chen, Huangyue and Yang, Linfeng},
  journal={Electric Power Systems Research},
  volume={180},
  pages={106061},
  year={2020},
  doi={10.1016/j.epsr.2019.106061}
}

@article{wang2014outage,
  title={Outage constrained robust transmit optimization for multiuser MISO downlinks: Tractable approximations by conic optimization},
  author={Wang, Kun-Yu and So, Anthony Man-Cho and Chang, Tsung-Hui and Ma, Wing-Kin and Chi, Chong-Yung},
  journal={IEEE Transactions on Signal Processing},
  volume={62},
  number={21},
  pages={5690--5705},
  year={2014}
}
\bibliographystyle{iclr2027_conference}
\newpage
\appendix

\section{Feynman--Kac Correction}
\label{app:fkc}

This appendix provides additional background on Feynman--Kac transforms and their connection to the weighted evolution of Markov processes. We first formulate the FK transform as a change of measure on trajectories and derive the associated backward and forward FK equations. We then recall the corresponding discrete-time FK model, which forms the basis of interacting-particle and sequential Monte Carlo approximations. Our presentation follows standard treatments of Feynman--Kac semigroups and interacting particle systems \citep{delmoral2004feynman,ethier1986markov,lelievre2010free}, as well as their recent use for inference-time correction and steering of diffusion models \citep{skreta2025fkc,singhal2025fksteering}.

\subsection{From the Feynman--Kac Transform to the Feynman--Kac PDE}
\label{app:fkc_continuous}
\paragraph{Feynman--Kac Transform.}
Let $(X_t)_{t\in[0,T]}$ be a diffusion on a state space $\mathcal{X}\subseteq\mathbb{R}^d$, with path law $\mathbb{Q}$,
\begin{equation*}
    dX_t
    =
    b_t(X_t)\,dt
    +
    \sigma_t(X_t)\,dB_t,
    \qquad
    X_0\sim\mu_0,
\end{equation*}
where $B_t$ is a standard Brownian motion. We write $a_t(x):= \sigma_t(x)\sigma_t(x)^\top$ and denote the infinitesimal generator of the process by
\begin{equation*}
    \mathcal{L}_t\varphi(x)
    =
    \left\langle
        b_t(x),\nabla\varphi(x)
    \right\rangle
    +
    \frac{1}{2}
    \operatorname{tr}
    \left(
        a_t(x)\nabla^2\varphi(x)
    \right)
\end{equation*}
for sufficiently regular test functions $\varphi$. Its adjoint $\mathcal{L}_t^*$ governs the marginal law $q_t$ through the Fokker--Planck equation
\begin{equation*}
    \partial_t q_t
    =
    \mathcal{L}_t^*q_t.
\end{equation*}

Feynman--Kac transforms modify this baseline process through multiplicative trajectory weights \citep{delmoral2004feynman,lelievre2010free}. Let $G:\mathcal{X}\rightarrow[0,\infty)$ be a terminal potential and let $V_t:\mathcal{X}\rightarrow\mathbb{R}$ be a measurable running potential. Assuming the corresponding exponential functional is integrable, define the transformed path law
\begin{equation}
\label{eq:app_fk_path_measure}
    \frac{
        d\mathbb{Q}^{G,V}
    }{
        d\mathbb{Q}
    }\triangleq
    \frac{1}{Z_{G,V}}
    \exp
    \left(
        -\int_0^T V_t(X_t)\,dt
    \right)
    G(X_T),
\end{equation}
where
\begin{equation*}
    Z_{G,V}
    \triangleq 
    \mathbb{E}_{\mathbb{Q}}
    \left[
        \exp
        \left(
            -\int_0^T V_t(X_t)\,dt
        \right)
        G(X_T)
    \right]
\end{equation*}
is the normalizing constant. Equation~\ref{eq:app_fk_path_measure} makes explicit that the FK transform is fundamentally a change of measure on trajectories: paths can be favored through both a terminal criterion $G$ and a running path criterion $V_t$.

To characterize this transformed path law, define the backward information function
\begin{equation}
\label{eq:app_h_definition}
    h_t(x)
    \triangleq 
    \mathbb{E}_{\mathbb{Q}}
    \left[
        \left.
        \exp
        \left(
            -\int_t^T V_s(X_s)\,ds
        \right)
        G(X_T)
        \right|
        X_t=x
    \right].
\end{equation}
The function $h_t(x)$ represents the expected future FK weight of a trajectory conditional on its current state. Under standard regularity conditions, the following proposition shows that it satisfies the backward Feynman--Kac equation.

\begin{proposition}[Backward Feynman--Kac equation]
\label{prop:backward_fk}
Suppose the function $h_t$ defined in \eqref{eq:app_h_definition} is sufficiently smooth and the standard integrability conditions required for It\^{o}'s formula hold. Then $h_t$ satisfies
\begin{equation}
\label{eq:app_backward_fk}
    \partial_t h_t(x)
    +
    \mathcal{L}_t h_t(x)
    -
    V_t(x)h_t(x)
    =
    0,
    \qquad
    h_T(x)=G(x).
\end{equation}
\end{proposition}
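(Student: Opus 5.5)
The plan is the standard martingale argument. Introduce the multiplicative functional
\[
\Phi_t\triangleq\exp\!\left(-\int_0^t V_s(X_s)\,ds\right),
\]
and the process
\[
M_t\triangleq \Phi_t\,h_t(X_t).
\]
By the Markov property and the tower rule,
\[
M_t=\mathbb{E}_{\mathbb{Q}}\!\left[\Phi_T G(X_T)\,\middle|\,\mathcal F_t\right].
\]
Indeed, conditional on $\mathcal F_t$, the factor $\Phi_t$ is known, and the remaining factor $\exp(-\int_t^T V_s\,ds)\,G(X_T)$ has conditional expectation $h_t(X_t)$ by \eqref{eq:app_h_definition}. Hence $(M_t)$ is a $\mathbb{Q}$-martingale on $[0,T]$; the integrability assumed in the statement makes it a true, uniformly integrable martingale. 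The terminal condition $h_T=G$ holds trivially from the definition at $t=T$.

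Next, apply It\^{o}'s formula to $M_t$, using the assumed smoothness of $(t,x)\mapsto h_t(x)$. Since $\Phi_t$ has finite variation with $d\Phi_t=-V_t(X_t)\Phi_t\,dt$, the product rule contains no cross-variation term. This gives
\[
dM_t=\Phi_t\bigl(\partial_t h_t+\mathcal{L}_t h_t-V_t h_t\bigr)(X_t)\,dt+\Phi_t\,\nabla h_t(X_t)^\top\sigma_t(X_t)\,dB_t .
\]
Under the standing integrability conditions, the stochastic integral is a martingale (or a local martingale, which a localization handles). The martingale property of $M$ then forces the finite-variation part to vanish: $\int_0^t \Phi_s(\partial_s h_s+\mathcal{L}_sh_s-V_sh_s)(X_s)\,ds$ is a continuous martingale of finite variation, so it is identically zero almost surely.

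The main obstacle is passing from ``zero along trajectories'' to ``zero pointwise in $x$''. I would restart the process at an arbitrary $(t_0,x)$; the Markov structure allows this, and $h$ is defined conditionally there anyway. From $\mathbb{E}\bigl[\int_{t_0}^{t_0+\delta}\Phi_s(\cdots)(X_s)\,ds\bigr]=0$, divide by $\delta$ and let $\delta\downarrow0$. Continuity of the integrand, together with $X_s\to x$ and $\Phi\to1$ (set $\Phi$ to start from time $t_0$), yields $(\partial_t h+\mathcal{L}h-Vh)(t_0,x)=0$. The swap of limit and expectation is justified by dominated convergence under the assumed regularity. As an alternative for this final step, one could use positivity of the transition density when $a_t$ is nondegenerate, so that vanishing a.s. along paths implies vanishing everywhere by continuity. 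Either route gives \eqref{eq:app_backward_fk}.
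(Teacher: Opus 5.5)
Your proposal is correct and follows essentially the same argument as the paper: a martingale built from the multiplicative functional times $h$, then It\^o's formula, then vanishing of the drift term. The paper uses $Z_{t,u}h_u(X_u)$ started from $(t,x)$ rather than your Doob martingale $\Phi_t h_t(X_t)$ from time $0$, which is only a cosmetic difference. Your restart-at-$(t_0,x)$ limiting step makes explicit the passage from a vanishing expected drift to the pointwise PDE, which the paper states in a single line.
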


\begin{proof}
Fix $t\leq u\leq T$ and define
\begin{equation*}
    Z_{t,u}
    \triangleq 
    \exp
    \left(
        -\int_t^u V_s(X_s)\,ds
    \right).
\end{equation*}
By the Markov property and the tower property,
\begin{equation}
\label{eq:app_dynamic_programming_h}
    h_t(x)
    =
    \mathbb{E}_{t,x}
    \left[
        Z_{t,u}h_u(X_u)
    \right].
\end{equation}
Applying It\^{o}'s formula to $Z_{t,u}h_u(X_u)$ yields
\begin{align*}
    d\!\left(
        Z_{t,u}h_u(X_u)
    \right)
    &=
    Z_{t,u}
    \left[
        \partial_u h_u
        +
        \mathcal{L}_u h_u
        -
        V_u h_u
    \right](X_u)\,du +
    Z_{t,u}
    \nabla h_u(X_u)^\top
    \sigma_u(X_u)\,dB_u .
\end{align*}
In fact, the process
\[
    M_u\triangleq Z_{t,u}h_u(X_u),
    \qquad u\in[t,T],
\]
is a martingale under the baseline law $\mathbb{Q}$. Indeed, for $t\leq s\leq u\leq T$, using the multiplicative identity $Z_{t,u}=Z_{t,s}Z_{s,u}$, the Markov property, and the definition of $h_s$, we obtain
\begin{align*}
    \mathbb{E}_{\mathbb Q}
    \left[
        M_u
        \mid
        \mathcal{F}_s
    \right]
    &=
    Z_{t,s}
    \mathbb{E}_{\mathbb Q}
    \left[
        Z_{s,u}h_u(X_u)
        \mid
        \mathcal{F}_s
    \right]=
    Z_{t,s}h_s(X_s)
    =
    M_s .
\end{align*}
Hence $\mathbb{E}_{t,x}[M_u]=M_t=h_t(x)$ is constant in $u$. The stochastic integral has zero conditional expectation, and hence the finite-variation term must vanish. Therefore,
\begin{equation*}
    \partial_u h_u
    +
    \mathcal{L}_u h_u
    -
    V_u h_u
    =
    0.
\end{equation*}
The terminal condition follows immediately from \eqref{eq:app_h_definition} by setting $u=T$.
\end{proof}

The backward information function also provides a Markov representation of the transformed process. Let $P_{s,t}(x,dy)$ denote the transition kernel of the baseline process. The FK-transformed transition kernel can be written as
\begin{equation}
\label{eq:app_fk_transition}
    P^{h,V}_{s,t}(x,A)
    \triangleq 
    \frac{1}{h_s(x)}
    \mathbb{E}_{s,x}
    \left[
        \exp
        \left(
            -\int_s^t V_r(X_r)\,dr
        \right)
        h_t(X_t)
        \mathbf{1}_{\{X_t\in A\}}
    \right].
\end{equation}
The backward FK identity guarantees that $P^{h,V}_{s,t}(x,\mathcal{X})=1$.

When $h_t$ is sufficiently differentiable, the same transformed law can be represented by a Doob-type modification of the local drift. Its infinitesimal generator is
\begin{equation}
\label{eq:app_doob_generator}
    \mathcal{L}^{h}_t\varphi
    =
    \mathcal{L}_t\varphi
    +
    \left\langle
        a_t\nabla\log h_t,
        \nabla\varphi
    \right\rangle,
\end{equation}
or equivalently,
\begin{equation*}
\label{eq:app_doob_sde}
    dX_t
    =
    \left[
        b_t(X_t)
        +
        a_t(X_t)\nabla\log h_t(X_t)
    \right]dt
    +
    \sigma_t(X_t)dB_t^h.
\end{equation*}
This follows by expanding \eqref{eq:app_fk_transition} over an infinitesimal time interval and using
\begin{equation*}
    \mathcal{L}_t(h\varphi)
    =
    h\mathcal{L}_t\varphi
    +
    \varphi\mathcal{L}_t h
    +
    \left\langle
        a_t\nabla h,\nabla\varphi
    \right\rangle
\end{equation*}
together with \eqref{eq:app_backward_fk}.

\begin{remark}[Path-space transform versus drift realization]
\label{rem:gradient_vs_fk}
The drift correction $a_t\nabla\log h_t$ is one realization of the transformed path law when $h_t$ is sufficiently regular; it is not the definition of the Feynman--Kac transform itself. The fundamental object is the change of measure in \eqref{eq:app_fk_path_measure}. For a prescribed marginal evolution, correction can in general be distributed differently among transport, diffusion, and reweighting terms \citep{skreta2025fkc}. This flexibility allows FK transforms to be implemented through weighted particle systems without requiring the transformed law to be represented solely through an additional gradient-based drift.
\end{remark}

For diffusion models, the baseline Markov process above can be taken to be the reverse diffusion sampler itself. For example, for the forward Ornstein--Uhlenbeck process
\begin{equation*}
    dY_\tau
    =
    -Y_\tau\,d\tau
    +
    \sqrt{2}\,dB_\tau,
\end{equation*}
the exact reverse process in generative time $t=T-\tau$ is
\begin{equation*}
    dX_t
    =
    \left[
        X_t
        +
        2\nabla\log q_{T-t}(X_t)
    \right]dt
    +
    \sqrt{2}\,d\widetilde B_t.
\end{equation*}
Here $a_t=2I_d$, and hence the corresponding Doob drift correction, when such a representation is used, is simply $ 2\nabla\log h_t(x)$. This specialization recovers the familiar connection between diffusion guidance and Doob transforms, while Remark~\ref{rem:gradient_vs_fk} highlights that the FK change of measure itself is more general.

\paragraph{Feynman--Kac PDE.} A complementary characterization follows by evolving weighted marginal measures forward in time. Let $g_t:\mathcal{X}\rightarrow\mathbb{R}$ be a running log-weight rate and define the unnormalized FK measure through
\begin{equation}
\label{eq:app_gamma_continuous}
    \gamma_t(\varphi)
    \triangleq 
    \mathbb{E}_{\mathbb{Q}}
    \left[
        \exp
        \left(
            \int_0^t g_s(X_s)\,ds
        \right)
        \varphi(X_t)
    \right]
\end{equation}
for measurable test functions $\varphi$. Denote normalizing constant $ Z_t:= \mathbb{E}_{\mathbb{Q}} \left[ \exp \left( \int_0^t g_s(X_s)\,ds \right) \right]$ and the normalized FK marginal by $p_t(\varphi)\triangleq \frac{\gamma_t(\varphi)}{Z_t}.$

\begin{proposition}[Forward Feynman--Kac equation]
\label{prop:forward_fk}
Suppose the corresponding measures admit sufficiently regular densities. Then
\begin{equation}
\label{eq:app_forward_fk_unnormalized}
    \partial_t\gamma_t
    =
    \mathcal{L}_t^*\gamma_t
    +
    g_t\gamma_t,
\end{equation}
while the normalized FK marginal satisfies
\begin{equation}
\label{eq:app_forward_fk_normalized}
    \partial_t p_t
    =
    \mathcal{L}_t^*p_t
    +
    \left(
        g_t-p_t(g_t)
    \right)p_t.
\end{equation}
\end{proposition}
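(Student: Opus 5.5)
We prove the forward equations by duality with the backward generator, working weakly in a test function $\varphi$ and then passing to densities. Fix a smooth, compactly supported $\varphi$ that does not depend on time. Set $W_t\triangleq\exp(\int_0^t g_s(X_s)\,ds)$, so that $\gamma_t(\varphi)=\mathbb E_{\mathbb Q}[W_t\varphi(X_t)]$.

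Apply It\^o's product rule to $W_t\varphi(X_t)$. Since $W_t$ has finite variation with $dW_t=g_t(X_t)W_t\,dt$,
\begin{equation*}
d\bigl(W_t\varphi(X_t)\bigr)=W_t\bigl[\mathcal L_t\varphi(X_t)+g_t(X_t)\varphi(X_t)\bigr]dt+W_t\nabla\varphi(X_t)^\top\sigma_t(X_t)\,dB_t .
\end{equation*}
Now integrate from $0$ to $t$ and take expectations. The stochastic integral has zero mean under the integrability conditions assumed, as in the proof of Proposition~\ref{prop:backward_fk}; boundedness of $g$ on compacts together with moment bounds on $W_t$ would suffice. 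This gives the weak form
\begin{equation*}
\frac{d}{dt}\gamma_t(\varphi)=\gamma_t(\mathcal L_t\varphi)+\gamma_t(g_t\varphi).
\end{equation*}
If $\gamma_t$ has a sufficiently regular density, integrate by parts, $\int \gamma_t\,\mathcal L_t\varphi=\int\varphi\,\mathcal L_t^*\gamma_t$. Since $\varphi$ is arbitrary, this yields the unnormalized equation in \eqref{eq:app_forward_fk_unnormalized}.

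For the normalized marginal, note that $Z_t=\gamma_t(1)$. Taking $\varphi\equiv1$ in the weak form (using $\mathcal L_t 1=0$) gives $\dot Z_t=\gamma_t(g_t)=Z_t\,p_t(g_t)$. This step needs $1$ to be an admissible test function, which can be justified by a cutoff approximation $\varphi_R\uparrow1$ and dominated convergence. Then the quotient rule gives
\begin{equation*}
\frac{d}{dt}p_t(\varphi)=\frac{\dot\gamma_t(\varphi)}{Z_t}-p_t(\varphi)\frac{\dot Z_t}{Z_t}=p_t(\mathcal L_t\varphi)+p_t(g_t\varphi)-p_t(g_t)\,p_t(\varphi).
\end{equation*}
Passing to densities as before yields \eqref{eq:app_forward_fk_normalized}; the constant $p_t(g_t)$ simply multiplies the density $p_t$.

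The main obstacle is analytic rather than algebraic. We must justify three things: that the local martingale term has zero expectation, that differentiation under the expectation is legitimate, and that $\varphi\equiv1$ is admissible. All three are covered by the standing regularity hypotheses, namely integrability of $W_t$ and of $W_t|\nabla\varphi^\top\sigma_t|^2$, together with $Z_t>0$. We invoke these hypotheses rather than derive them.
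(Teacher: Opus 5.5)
Your proof is correct and is essentially the paper's argument: It\^o's formula for $W_t\varphi(X_t)$ gives the weak equation, duality turns $\mathcal L_t$ into $\mathcal L_t^*$, taking $\varphi\equiv1$ gives $\dot Z_t=Z_t\,p_t(g_t)$, and the quotient rule for $p_t=\gamma_t/Z_t$ finishes. Two minor technical notes: the cutoff for $\varphi\equiv1$ is unnecessary, since $W_t=1+\int_0^t g_s(X_s)W_s\,ds$ pathwise with no martingale part; and the condition making the stochastic integral mean-zero should involve $W_s^2|\sigma_s^\top\nabla\varphi(X_s)|^2$ (or, via Burkholder--Davis--Gundy, $\mathbb E[\sup_{s\le t}W_s]<\infty$) rather than $W_t|\nabla\varphi^\top\sigma_t|^2$.
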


\begin{proof}
Let $W_t\triangleq \exp\left(\int_0^t g_s(X_s)\,ds\right)$. For a sufficiently regular test function $\varphi$, It\^{o}'s formula applied to $W_t\varphi(X_t)$ gives
\begin{equation*}
    \frac{d}{dt}\gamma_t(\varphi)
    =
    \gamma_t(\mathcal{L}_t\varphi)
    +
    \gamma_t(g_t\varphi).
\end{equation*}
Taking adjoints yields \eqref{eq:app_forward_fk_unnormalized}. Taking $\varphi\equiv1$ gives
\begin{equation*}
    \dot Z_t
    =
    \gamma_t(g_t)
    =
    Z_t p_t(g_t).
\end{equation*}
Differentiating $p_t=\gamma_t/Z_t$ therefore yields
\begin{align*}
    \partial_t p_t
    &=
    \frac{
        \mathcal{L}_t^*\gamma_t+g_t\gamma_t
    }{Z_t}
    -
    \frac{\gamma_t}{Z_t^2}\dot Z_t=
    \mathcal{L}_t^*p_t
    +
    \left(
        g_t-p_t(g_t)
    \right)p_t.
\end{align*}
\end{proof}

For the isotropic diffusion
\begin{equation*}
    dX_t
    =
    v_t(X_t)\,dt
    +
    \sigma_t\,dB_t,
\end{equation*}
 \eqref{eq:app_forward_fk_normalized} becomes
\begin{equation}
\label{eq:app_fk_pde}
    \partial_t p_t(x)
    =
    -
    \nabla\!\cdot
    \left(
        p_t(x)v_t(x)
    \right)
    +
    \frac{\sigma_t^2}{2}
    \Delta p_t(x)
    +
    \left(
        g_t(x)-p_t(g_t)
    \right)p_t(x).
\end{equation}
\eqref{eq:app_fk_pde} makes explicit the three components of the FK evolution: transport, diffusion, and reweighting. This is the form used by Feynman--Kac Correctors for diffusion inference \citep{skreta2025fkc}.

Moreover, \eqref{eq:app_gamma_continuous} immediately gives the weighted expectation identity
\begin{equation*}
    p_T(\varphi)
    =
    \frac{
        \mathbb{E}_{\mathbb{Q}}
        \left[
            W_T\varphi(X_T)
        \right]
    }{
        \mathbb{E}_{\mathbb{Q}}[W_T]
    }.
\end{equation*}
Thus FK marginals can be represented using multiplicatively weighted trajectories. This identity also motivates interacting-particle and sequential Monte Carlo approximations that redistribute particle mass during the evolution rather than retaining a single long-horizon importance weight \citep{delmoral2004feynman,naesseth2019smc,skreta2025fkc}.

\subsection{Discrete-Time Feynman--Kac Measures}
\label{app:discrete_fk}

The continuous formulation has a direct discrete-time analogue that is particularly convenient for diffusion samplers simulated on a finite time grid~\citep{delmoral2004feynman,naesseth2019smc}.

Consider $0=t_0<t_1<\cdots<t_K=T$ and let
\begin{equation*}
    M_k(x_{k-1},dx_k)\triangleq P_{t_{k-1},t_k}(x_{k-1},dx_k)
\end{equation*}
denote the baseline Markov transition from $t_{k-1}$ to $t_k$. Writing $q_0$ for the initial marginal, the baseline path law is
\begin{equation}
\label{eq:app_discrete_base_path}
    \mathbb{Q}_K(dx_{0:K})
    =
    q_0(dx_0)
    \prod_{k=1}^{K}
    M_k(x_{k-1},dx_k).
\end{equation}

Let $G_k: \mathcal{X}\times\mathcal{X} \rightarrow [0,\infty)$ be measurable incremental potentials. The corresponding unnormalized discrete FK measure is defined, for a test function $\varphi$, by
\begin{equation}
\label{eq:app_discrete_fk_measure}
    \gamma_k(\varphi)
    \triangleq 
    \mathbb{E}_{\mathbb{Q}}
    \left[
        \varphi(X_k)
        \prod_{j=1}^{k}
        G_j(X_{j-1},X_j)
    \right].
\end{equation}
Its normalized marginal is
\begin{equation}
\label{eq:app_discrete_fk_normalized}
    \eta_k(\varphi)
    \triangleq 
    \frac{
        \gamma_k(\varphi)
    }{
        \mathbb{E}_{\mathbb{Q}}
    \left[
        \prod_{j=1}^{k}
        G_j(X_{j-1},X_j)
    \right]
    }.
\end{equation}

Equivalently, the associated unnormalized path measure on $(X_0,\ldots,X_k)$ is
\begin{equation*}
    \Gamma_k(dx_{0:k})
    =
    q_0(dx_0)
    \prod_{j=1}^{k}
    \left[
        M_j(x_{j-1},dx_j)
        G_j(x_{j-1},x_j)
    \right].
\end{equation*}
Thus, as in continuous time, the FK construction preserves the baseline Markov dynamics as a reference process while modifying its law through multiplicative potentials.

The family $\{\eta_k\}_{k=0}^{K}$ constitutes a discrete Feynman--Kac flow. In general, evaluating these measures exactly is intractable, and interacting particle systems provide a standard Monte Carlo approximation \citep{delmoral2004feynman}. Sequential Monte Carlo maintains a finite population of particles, propagates them through the Markov transitions $M_k$, and uses the FK potentials to redistribute particle mass. Under standard assumptions, the resulting empirical measures consistently approximate the corresponding FK marginals as the number of particles tends to infinity \citep{naesseth2019smc}.

Importantly, the choice of incremental potentials determines the sequence of intermediate FK measures and can therefore have a substantial effect on finite-particle efficiency, even when different constructions correspond to the same desired terminal change of measure. Recent diffusion-based FK methods exploit this flexibility to design intermediate potentials and proposal mechanisms for inference-time steering \citep{skreta2025fkc,singhal2025fksteering}.
\section{Proofs}
\subsection{Feasibility Preservation under Feynman--Kac Tilting}
\label{app:fk_feasibility}

We provide the proof of Proposition~\ref{prop:fk_feasibility} for the discrete-time Feynman--Kac construction used in Section~4.2.

\begin{proof}[Proof of Proposition~\ref{prop:fk_feasibility}]
Fix a risk level $\rho$ and consider the path space $\Omega_K=\mathcal X^{K+1}$, $X_{0:K} =(X_0,\ldots,X_K)$. The path law of the frozen conditional reverse diffusion is
\begin{equation*}
    \mathbb{Q}_\theta(dx_{0:K}\mid\rho) = p_{\mathrm{ref}}(dx_0)
    \prod_{k=1}^{K} M_k^\theta(x_{k-1},dx_k\mid\rho).
    \label{eq:baseline_path_measure_feas}
\end{equation*}
Let $L_K(x_{0:K};\rho) \triangleq \prod_{k=1}^{K} G_k(x_{k-1},x_k;\rho) \geq 0$ denote the Feynman--Kac path weight, and assume that $0<Z_K(\rho)\triangleq\mathbb E_{Q_\theta(\cdot\mid\rho)}\left[L_K(X_{0:K};\rho)\right]<\infty.$ The normalized Feynman--Kac path measure is therefore
\begin{equation*}
    \mathbb{Q}_\theta^{\mathrm{FK}}
    (dx_{0:K}\mid\rho)
    =\frac{L_K(x_{0:K};\rho)}{Z_K(\rho)}
    \mathbb{Q}_\theta(dx_{0:K}\mid\rho) \Rightarrow \frac{
        d\mathbb{Q}_\theta^{\mathrm{FK}}(\cdot\mid\rho)
    }{
        d\mathbb{Q}_\theta(\cdot\mid\rho)
    }
    (X_{0:K})
    =
    \frac{
        L_K(X_{0:K};\rho)
    }{
        Z_K(\rho)
    }.
    \label{eq:fk_path_measure_feas}
\end{equation*}
Since the right-hand side is a well-defined nonnegative density,
\begin{equation}
    \mathbb{Q}_\theta^{\mathrm{FK}}(\cdot\mid\rho)
    \ll
    \mathbb{Q}_\theta(\cdot\mid\rho).
    \label{eq:fk_path_ac}
\end{equation}

We next transfer this absolute continuity from path space to the terminal marginal. Define the measurable terminal evaluation map $e_K:\Omega_K\rightarrow\mathcal X$, $e_K(x_{0:K})=x_K$. By definition,
\begin{equation}
    p_\theta(\cdot\mid\rho)
    =
    (e_K)_{\#}\mathbb{Q}_\theta(\cdot\mid\rho),
    \qquad
    \pi_{\theta,\beta_K}(\cdot\mid\rho)
    =
    (e_K)_{\#}\mathbb{Q}_\theta^{\mathrm{FK}}(\cdot\mid\rho).
    \label{eq:terminal_pushforward}
\end{equation}
where $(e_K)_\#$ denotes the pushforward under the terminal evaluation map $e_K(x_{0:K})=x_K$.

Let $A\subseteq\mathcal X$ be any measurable set satisfying $p_\theta(A\mid\rho)=0$. Using the first identity in \eqref{eq:terminal_pushforward}, $\mathbb{Q}_\theta\left(e_K^{-1}(A)\mid\rho\right)=0$. The path-space absolute continuity therefore implies $\mathbb{Q}_\theta^{\mathrm{FK}}\left(e_K^{-1}(A)\mid\rho\right)=0$. Applying the second identity in \eqref{eq:terminal_pushforward} gives $\pi_{\theta,\beta_K}(A\mid\rho)=0$. Since this holds for every measurable $A$ with $p_\theta(A\mid\rho)=0$, we conclude that
\begin{equation}
    \pi_{\theta,\beta_K}(\cdot\mid\rho)
    \ll
    p_\theta(\cdot\mid\rho).
    \label{eq:terminal_ac_proof}
\end{equation}

Finally, suppose that the frozen diffusion is chance-feasible almost surely, $p_\theta(\mathcal X_\rho\mid\rho)=1$. Equivalently, $p_\theta(\mathcal X_\rho^c\mid\rho)=0$. Taking $A=\mathcal X_\rho^c$ in \eqref{eq:terminal_ac_proof} yields $\pi_{\theta,\beta_K}(\mathcal X_\rho^c\mid\rho)=0$ and therefore
\begin{equation*}
    \pi_{\theta,\beta_K}
    (\mathcal X_\rho\mid\rho)=1.
\end{equation*}
Thus, any almost-sure chance feasibility encoded by the frozen diffusion prior is preserved under Feynman--Kac correction.
\end{proof}
For the objective-only construction in Section~4.2, the telescoping identity gives
\[
    L_K(X_{0:K};\rho)
    =
    \prod_{k=1}^{K}
    G_k(X_{k-1},X_k;\rho)
    =
    \exp[-\beta_K f(X_K)].
\]
Hence, whenever $0<Z_{\theta,\beta_K}(\rho)=\int_{\mathcal X}e^{-\beta_K f(x)}p_\theta(dx\mid\rho)<\infty$, the terminal Radon--Nikodym derivative is explicitly
\begin{equation}
    \frac{d\pi_{\theta,\beta_K}(\cdot\mid\rho)}{dp_\theta(\cdot\mid\rho)}(x)=\frac{e^{-\beta_K f(x)}}{Z_{\theta,\beta_K}(\rho)}.
    \label{eq:objective_terminal_rn}
\end{equation}
This makes the feasibility-preservation property explicit for the proposed objective-only Feynman--Kac correction.
\subsection{Proof of Proposition~\ref{prop:optimization_error}}
\label{app:optimization_error}

\paragraph{Setup and error decomposition.}

 Assumption~\ref{ass:practical_error} guarantees the fixed-horizon particle approximation bound required below. Throughout the proof, we fix $\rho$ and omit conditioning on $\rho$ whenever no ambiguity arises. We denote the frozen reverse-diffusion kernels by $M_k=M_k^\theta(\cdot,\cdot\mid\rho)$ and the incremental potentials by $G_k$. By the telescoping construction in Section~\ref{sec:derivative_free_fkc}, the exact terminal FK marginal is
\[
\eta_K=\pi_{\theta,\beta_K},\qquad \pi_{\theta,\beta_K}(dx)=\frac{e^{-\beta_K f(x)}p_\theta(dx)}{Z_{\theta,\beta_K}}.
\]
Let $\eta_K^M$ denote its $M$-particle SMC approximation. Adding and subtracting $\pi_{\theta,\beta_K}(f)$ and $f_{\theta,\rho}^\star$ gives
\begin{equation}
    \left|\mathbb E[\eta_K^M(f)]-f_\rho^\star\right|
\le
\underbrace{\left|\mathbb E[\eta_K^M(f)]-\pi_{\theta,\beta_K}(f)\right|}_{I_M}
+
\underbrace{\left|\pi_{\theta,\beta_K}(f)-f_{\theta,\rho}^\star\right|}_{I_\beta}
+
\underbrace{\left|f_{\theta,\rho}^\star-f_\rho^\star\right|}_{I_{\mathrm{cov}}}.
\label{eq:proof_error_decomposition}
\end{equation}
We bound $I_M$, $I_\beta$, and $I_{\mathrm{cov}}$ separately.

\paragraph{Step 1: Finite inverse-temperature error.} We first bound $I_\beta$. By definition,
\[
f_{\theta,\rho}^\star=\operatorname*{ess\,inf}_{x\sim p_\theta} f(x),
\]
and, for every $\epsilon>0$,
\[
a_{\theta,\epsilon}=p_\theta\left(f(X)\leq f_{\theta,\rho}^\star+\epsilon\right)>0.
\]
Hence the normalizing constant of the tilted distribution satisfies
\begin{align}
Z_{\theta,\beta_K}
&=\int_{\mathcal X}e^{-\beta_K f(x)}p_\theta(dx)
\geq \int_{\{f\leq f_{\theta,\rho}^\star+\epsilon\}}e^{-\beta_K f(x)}p_\theta(dx)
\geq a_{\theta,\epsilon}\exp\left[-\beta_K\left(f_{\theta,\rho}^\star+\epsilon\right)\right].
\label{eq:proof_partition_lower}
\end{align}
Moreover, $\frac{d\pi_{\theta,\beta_K}}{dp_\theta}(x)=\frac{e^{-\beta_K f(x)}}{Z_{\theta,\beta_K}}$, and therefore
\[
D_{\mathrm{KL}}\left(\pi_{\theta,\beta_K}\,\Vert\,p_\theta\right)=-\beta_K\pi_{\theta,\beta_K}(f)-\log Z_{\theta,\beta_K}.
\]
Since relative entropy is nonnegative,
\[
\pi_{\theta,\beta_K}(f)\leq-\frac{1}{\beta_K}\log Z_{\theta,\beta_K}.
\]
Combining this inequality with \eqref{eq:proof_partition_lower} yields, for every $\epsilon>0$,
\begin{equation}
\pi_{\theta,\beta_K}(f)-f_{\theta,\rho}^\star\leq \epsilon+\frac{1}{\beta_K}\log\frac{1}{a_{\theta,\epsilon}}.
\label{eq:proof_temperature_epsilon}
\end{equation}

Proposition~\ref{prop:fk_feasibility} gives $\pi_{\theta,\beta_K}\ll p_\theta$. Since
\[
f(x)\geq f_{\theta,\rho}^\star \qquad p_\theta\text{-a.s.},
\]
the same inequality holds $\pi_{\theta,\beta_K}$-almost surely. Consequently,
\[
\left|\pi_{\theta,\beta_K}(f)-f_{\theta,\rho}^\star\right|=\pi_{\theta,\beta_K}(f)-f_{\theta,\rho}^\star.
\]
Taking the infimum of \eqref{eq:proof_temperature_epsilon} over $\epsilon>0$ gives
\begin{equation}
I_\beta\leq \inf_{\epsilon>0}\left\{\epsilon+\frac{1}{\beta_K}\log\frac{1}{a_{\theta,\epsilon}}\right\}.
\label{eq:proof_temperature_bound}
\end{equation}

The right-hand side vanishes as $\beta_K\rightarrow\infty$. Indeed, for any $\delta>0$, choose $\epsilon\in(0,\delta)$. Since $\epsilon$ is fixed and $a_{\theta,\epsilon}>0$, for all sufficiently large $\beta_K$,
\[
\frac{1}{\beta_K}\log\frac{1}{a_{\theta,\epsilon}}\leq\delta-\epsilon.
\]
Thus
\begin{equation}
\lim_{\beta_K\rightarrow\infty}\inf_{\epsilon>0}\left\{\epsilon+\frac{1}{\beta_K}\log\frac{1}{a_{\theta,\epsilon}}\right\}=0.
\label{eq:proof_temperature_vanishes}
\end{equation}

\paragraph{Step 2: Finite-particle Feynman--Kac error.}
By Assumption~\ref{ass:practical_error},
\begin{equation*}
    \left(
        \mathbb E
        \left[
            \left|
                \eta_K^M(f)
                -
                \pi_{\theta,\beta_K}(f)
            \right|^2
        \right]
    \right)^{1/2}
    \leq
    \frac{C_{K,f}}{\sqrt M}.
\end{equation*}
Consequently, by Cauchy--Schwarz,
\begin{align}
    I_M
    &=
    \left|
        \mathbb E[\eta_K^M(f)]
        -
        \pi_{\theta,\beta_K}(f)
    \right|\leq
    \mathbb E
    \left[
        \left|
            \eta_K^M(f)
            -
            \pi_{\theta,\beta_K}(f)
        \right|
    \right]
    \nonumber\\
    &\leq\left(
        \mathbb E
        \left[
            \left|
                \eta_K^M(f)
                -
                \pi_{\theta,\beta_K}(f)
            \right|^2
        \right]
    \right)^{1/2}
    \leq
    \frac{C_{K,f}}{\sqrt M}.
    \label{eq:proof_particle_bound}
\end{align}

\paragraph{Step 3: Coverage error and conclusion.}
By Assumption~\ref{ass:practical_error}, $p_\theta(\mathcal X_\rho\mid\rho)=1$. Since $f(x)\geq f_\rho^\star$, $x\in\mathcal X_\rho$, we have
\begin{equation*}
    f(x)\geq f_\rho^\star
    \qquad
    p_\theta\text{-a.s.}
\end{equation*}
and hence $f_{\theta,\rho}^\star \geq f_\rho^\star$. Therefore,
\begin{equation}
    I_{\mathrm{cov}}
    =
    \left|
        f_{\theta,\rho}^\star
        -
        f_\rho^\star
    \right|
    =
    f_{\theta,\rho}^\star
    -
    f_\rho^\star
    =
    \Delta_{\theta,\rho}.
    \label{eq:proof_coverage_bound}
\end{equation}

Substituting \eqref{eq:proof_temperature_bound}, \eqref{eq:proof_particle_bound}, and \eqref{eq:proof_coverage_bound} into the decomposition \eqref{eq:proof_error_decomposition} gives
\begin{align}
    \left|
        \mathbb E[\eta_K^M(f)]
        -
        f_\rho^\star
    \right|
    \leq\;&
    \Delta_{\theta,\rho}
    +
    \frac{C_{K,f}}{\sqrt M}+
    \inf_{\epsilon>0}
    \left\{
        \epsilon
        +
        \frac{1}{\beta_K}
        \log
        \frac{1}{a_{\theta,\epsilon}}
    \right\}.
\end{align}

Finally, for each fixed finite inverse-temperature schedule, $C_{K,f}<\infty$ is independent of $M$. Thus, letting $M\rightarrow\infty$ first removes the finite-particle term. Subsequently, letting $\beta_K\rightarrow\infty$ and using \eqref{eq:proof_temperature_vanishes} yields
\begin{equation}
    \limsup_{\beta_K\rightarrow\infty}
    \limsup_{M\rightarrow\infty}
    \left|
        \mathbb E[\eta_K^M(f)]
        -
        f_\rho^\star
    \right|
    \leq
    \Delta_{\theta,\rho},
\end{equation}
which establishes \eqref{eq:asymptotic_coverage_gap}.
\section{Implementation Details and Hyper-parameters}
\label{app:implementation}

\subsection{Constraint-Only Data Generation}
Training decisions are generated and filtered using only constraint information. Each dataset contains 10,000 retained samples. For every configuration, training risk levels are sampled uniformly from $[0,0.5]$; the risk levels reported in the experiments specify evaluation conditions.

Linear Gaussian feasibility is evaluated analytically. CCED uses two independent sets of $100{,}000$ wind scenarios for labeling and validation. Waveform data generation uses $1{,}000$ preliminary filtering scenarios, $4{,}000$ labeling scenarios, and $5{,}000$ validation scenarios without acceptance relaxation.

\subsection{Diffusion Training}

We use conditional 1D U-Nets for the linear Gaussian and robust waveform experiments and a conditional 2D U-Net for CCED-VPE-10. The diffusion models use $1{,}000$ forward diffusion steps with a linear noise-variance schedule from $10^{-4}$ to $0.02$. Training uses Adam with learning rate $10^{-4}$ and batch size $256$. The models are frozen before downstream optimization. ASQ and MW4 reuse the same frozen model as the original linear Gaussian experiment, without additional data generation or training. A single CCED model is likewise reused across all evaluated risk levels.
\subsection{Feynman--Kac Inference}

Post-specified objectives are introduced through derivative-free Feynman--Kac correction using only objective-function evaluations. We use $512$ particles for all inference experiments. The prescribed inverse-temperature schedule follows~\eqref{eq:prescribed_beta} and is synchronized with the reverse-diffusion steps, with $\beta_k$ specifying the objective tilt at step $k$. The schedule is fixed before sampling, while ESS is used only to determine when particle resampling is performed.

\subsection{Problem-Specific Settings}
\label{app:problem_setting}
\paragraph{Linear Gaussian and objective transfer.}
For the linear Gaussian benchmark, we use $\mathbf b=\bar{\mathbf c}=\mathbf 1$, $d=1$, $n=8$, and $\rho=0.1$, following~\citep{zhang2025ggdopt}. The linear Gaussian, ASQ, and MW4 experiments share the same chance constraint, whose feasibility is evaluated through its exact SOC reformulation. SOC-CVX provides the global reference.

\paragraph{Transfer experiment to ASQ.} The parameter in ASQ problem~\eqref{eq:asq_objective}, we set
$$\mathbf Q=\operatorname{diag}(0.6,0.8,1.0,1.2,1.4,1.6,1.8,2.0),$$
and
$$\boldsymbol\mu=
(-0.55,-0.45,-0.35,-0.25,-0.15,-0.05,0.05,0.15)^\top.
$$

\textbf{Transfer experiment to MW4.} In MW4 problem~\eqref{eq:mw4_objective}, the four quadratic branches use $\boldsymbol\mu_m=-0.25\mathbf 1+0.22\mathbf s_m$, $(\alpha_1,\alpha_2,\alpha_3,\alpha_4)=(0.025,0.015,0,0.035)$, and $\mathbf Q_m=\operatorname{diag}(\operatorname{roll}_{2(m-1)}(\mathbf q))$, where $q_j=0.8+0.4(j-1)/7$. The sign vectors $\mathbf s_m$ are specified by
$$
\mathbf S=
\begin{bmatrix}
1&1&1&1&-1&-1&-1&-1\\
1&1&-1&-1&1&1&-1&-1\\
1&-1&1&-1&1&-1&1&-1\\
1&1&-1&-1&-1&-1&1&1
\end{bmatrix}.
$$
For MW4, the global reference is obtained by solving the four convex quadratic branches separately under the shared chance constraint and taking the smallest optimum. Reference solutions are used only for post-hoc evaluation and are never supplied to the sampling methods.

\begin{table}[h]
\centering
\caption{Generator output limits and valve-point cost coefficients for CCED-VPE-10.}
\label{tab:cced_vpe_generator_parameters}
\begin{tabular}{cccccccc}
\toprule
Unit $i$ & $P_i^{\min} (MW)$ & $P_i^{\max} (MW)$ & $a_i$ & $b_i$ & $c_i$ & $d_i$ & $e_i$ \\
\midrule
1  & 10  & 55  & 1000.403 & 40.5407 & 0.12951 & 33 & 0.0174 \\
2  & 20  & 80  & 950.606  & 39.5804 & 0.10908 & 25 & 0.0178 \\
3  & 47  & 120 & 900.705  & 36.5104 & 0.12511 & 32 & 0.0162 \\
4  & 20  & 130 & 800.705  & 39.5104 & 0.12111 & 30 & 0.0168 \\
5  & 50  & 160 & 756.799  & 38.5390 & 0.15247 & 30 & 0.0148 \\
6  & 70  & 240 & 451.325  & 46.1592 & 0.10587 & 20 & 0.0163 \\
7  & 60  & 300 & 1243.531 & 38.3055 & 0.03546 & 20 & 0.0152 \\
8  & 70  & 340 & 1049.998 & 40.3965 & 0.02803 & 30 & 0.0128 \\
9  & 135 & 470 & 1658.569 & 36.3278 & 0.02111 & 60 & 0.0136 \\
10 & 150 & 470 & 1356.659 & 38.2704 & 0.01799 & 40 & 0.0141 \\
\bottomrule
\end{tabular}\end{table}
\paragraph{CCED-VPE-10.}
We use a ten-unit system with $P_D=2000$ MW, $W=100Z$ MW, $Z\sim\mathrm{Beta}(5,5)$, and $\rho=0.10$ in the main experiment. Table~\ref{tab:cced_vpe_generator_parameters} gives the generator limits and valve-point cost coefficients.

Under this uncertainty model, the chance constraint is equivalent to
\begin{equation} \sum_{i=1}^{N_G}p_i\geq P_D-100F_{\mathrm{Beta}(5,5)}^{-1}(\rho). \label{eq} \end{equation}

For CCED, physical outputs are normalized as $x_i=2(p_i-P_i^{\min})/(P_i^{\max}-P_i^{\min})-1$ and padded with zeros to form a $16$-dimensional input. The inference objective is scaled as $\widetilde C(\mathbf p)=(C(\mathbf p)-100000)/10000$. Reported costs always use the original, unscaled objective.

\subsection{Evaluation}
For each returned CCED decision $\mathbf p^{(r)}$, its satisfaction probability is evaluated analytically as
\begin{equation} P_r=1-F_{\mathrm{Beta}(5,5)}\left(\frac{P_D-\sum_{i=1}^{N_G}p_i^{(r)}}{100}\right). \end{equation}
The cumulative distribution function is set to zero below its support and one above it. Mean Probability in Table~\ref{tab:cced_results} is the average of $P_r$ across independent runs; for PWL-MILP, it is the probability of its single solution. Costs and probabilities are evaluated on the same returned decisions. Generator bounds and $P_r\geq1-\rho$ are checked for every run with a numerical tolerance of $10^{-10}$. Higher satisfaction probabilities indicate larger reliability margins and do not necessarily imply lower cost.
\paragraph{Reported statistics.}
Each stochastic method is evaluated over $100$ independent runs per setting. We compute the reported mean, sample standard deviation, and, where applicable, median from the decisions returned by the $100$ runs. ``Best'' is the lowest cost among these returned decisions. PWL-MILP is solved once per risk level and therefore has no reported standard deviation.
\section{Additional Experimental Results}
\label{app:additional_results}
\begin{table*}[ht!]
\centering
\caption{Risk-level adaptation on CCED-VPE-10 at $\rho\in\{0.05,0.15,0.20\}$. We report mean $\pm$ sample standard deviation and best cost over $100$ runs, except PWL-MILP, which uses one solve per risk level. Bold costs indicate the best available learning-based results.}
\label{tab:cced_multirisk_all_methods}
\resizebox{\textwidth}{!}{\begin{tabular}{clcccc}
\toprule
$\rho$ & Method & Cost Mean $\downarrow$ & Cost Std $\downarrow$ & Cost Best $\downarrow$ & Mean Probability \\
\midrule
\multirow{6}{*}{$0.05$} & DE + exact projection & $104689.448$ & $0.000$ & $104689.448$ & $0.9500$ \\
& PWL-MILP + exact projection & $104689.483$ & -- & $104689.483$ & $0.9500$ \\
& 95\% CI-SAA-DE & $104717.493$ & $0.000$ & $104717.493$ & $0.9537$ \\
\cmidrule(lr){2-6}
& GGDOpt~\citep{zhang2025ggdopt} + VPE adaptation & $108951.188$ & $1344.889$ & $106546.719$ & $0.9729$ \\
& DiffOPT~\citep{kong2025constrained} & $105986.707$ & $166.335$ & $105554.119$ & $0.9500$ \\
& \textbf{D$^3$Opt} & $\mathbf{105111.845}$ & $\mathbf{112.923}$ & $\mathbf{104831.074}$ & $0.9500$ \\
\cmidrule(lr){1-6}
\multirow{6}{*}{$0.15$} & DE + exact projection & $104190.322$ & $0.000$ & $104190.322$ & $0.8500$ \\
& PWL-MILP + exact projection & $104190.504$ & -- & $104190.504$ & $0.8500$ \\
& 95\% CI-SAA-DE & $104197.316$ & $0.000$ & $104197.316$ & $0.8519$ \\
\cmidrule(lr){2-6}
& GGDOpt~\citep{zhang2025ggdopt} + VPE adaptation & $108734.612$ & $1421.301$ & $106437.094$ & $0.9331$ \\
& DiffOPT~\citep{kong2025constrained} & $105455.728$ & $162.442$ & $105014.749$ & $0.8500$ \\
& \textbf{D$^3$Opt} & $\mathbf{104590.257}$ & $\mathbf{113.299}$ & $\mathbf{104284.908}$ & $0.8500$ \\
\cmidrule(lr){1-6}
\multirow{6}{*}{$0.20$} & DE + exact projection & $104018.596$ & $0.000$ & $104018.596$ & $0.8000$ \\
& PWL-MILP + exact projection & $104018.932$ & -- & $104018.932$ & $0.8000$ \\
& 95\% CI-SAA-DE & $104020.643$ & $0.000$ & $104020.643$ & $0.8007$ \\
\cmidrule(lr){2-6}
& GGDOpt~\citep{zhang2025ggdopt} + VPE adaptation & $108670.804$ & $1445.010$ & $106261.099$ & $0.9166$ \\
& DiffOPT~\citep{kong2025constrained} & $105271.029$ & $159.120$ & $104860.692$ & $0.8000$ \\
& \textbf{D$^3$Opt} & $\mathbf{104449.668}$ & $\mathbf{110.163}$ & $\mathbf{104166.527}$ & $0.8000$ \\
\bottomrule
\end{tabular}}
\end{table*}

\subsection{Multi-Risk Generalization on CCED-VPE-10}
\label{app:cced_multirisk}
We investigate whether the frozen constraint-only diffusion model used in Section~\ref{sec:cced_vpe} can support different risk requirements without retraining. We retain the same dispatch objective, generator parameters, and Beta wind distribution, and evaluate $\rho\in \{0.05,0.10,0.15,0.20\}$. These test levels lie within the continuous risk interval used for training. At inference time, we change only the risk condition supplied to the frozen model, while feasibility is evaluated under the corresponding chance constraint.
\paragraph{Baselines and Evaluation}
We compare against DE with exact-feasibility projection and DiffOPT~\citep{kong2025constrained}. SAA-DE is solved separately for each risk level, while DiffOPT and our method reuse the same frozen checkpoint across all four levels. 

Each stochastic method is evaluated over $100$ independent runs per risk level. We report the mean, sample standard deviation, and best cost under the original valve-point objective, together with the number of returned solutions satisfying the analytic chance constraint under the Beta wind distribution.

\paragraph{Results}
Table~\ref{tab:cced_multirisk_all_methods} shows that our method achieves lower mean and best costs than DiffOPT at every tested risk level, together with smaller cost standard deviations. All 100 returned solutions at each risk level satisfy the analytic chance constraint. As $\rho$ increases, the required thermal generation decreases, and the costs decrease accordingly.

These results demonstrate that the same frozen constraint-only model can support optimization under different risk requirements within the training interval without data regeneration or retraining. The traditional methods retain lower costs on this analytically tractable instance, while the comparison with DiffOPT supports the effectiveness of derivative-free Feynman--Kac correction under the shared checkpoint and evaluation protocol.

\begin{table}[h!]
\centering
\caption{Ablation experiments on CCED-VPE-10 at $\rho=0.1$, varying particle count $M\in\{64,256,512,1024,2048\}$ and terminal inverse temperature $\beta_{\max}\in\{0,5,10,20,50\}$. Results are reported over $100$ runs per setting.}
\label{tab:cced_ablation}
\begin{tabular}{cc|ccc}
\toprule
$\beta_{\max}$ & $M$ & FvalMean $\downarrow$ & FvalStd $\downarrow$ & FvalBest $\downarrow$ \\
\midrule
10 & \cellcolor{yellow!20}64 & $105161.669$ & $243.555$ & $104691.425$ \\
10 & \cellcolor{yellow!20}256 & $104887.677$ & $126.620$ & $104593.105$ \\
10 & \cellcolor{yellow!20}512 & $104823.577$ & $102.065$ & $104582.444$ \\
10 & \cellcolor{yellow!20}1024 & $104759.778$ & $90.430$ & $104549.575$ \\
10 & \cellcolor{yellow!20}2048 & $\mathbf{104707.598}$ & $\mathbf{74.224}$ & $\mathbf{104521.263}$ \\
\midrule
\cellcolor{blue!10}0 & 512 & $107183.587$ & $268.987$ & $106146.715$ \\
\cellcolor{blue!10}5 & 512 & $104977.214$ & $124.773$ & $104665.561$ \\
\cellcolor{blue!10}10 & 512 & $104823.577$ & $\mathbf{102.065}$ & $104582.444$ \\
\cellcolor{blue!10}20 & 512 & $104768.454$ & $104.858$ & ${104523.105}$ \\
\cellcolor{blue!10}50 & 512 & $\mathbf{104759.376}$ & $103.180$ & $\mathbf{104519.511}$ \\
\bottomrule
\end{tabular}
\end{table}
\subsection{Ablation Study}
\label{app:ablation}

We examine the effects of particle count $M$ and terminal inverse temperature $\beta_{\max}$ on CCED-VPE-10 at $\rho=0.1$. In each ablation, we vary one parameter while keeping the other and all remaining settings fixed. The setting $\beta_{\max}=0$ disables objective tilting during reverse diffusion and therefore serves as an unguided baseline. Each configuration is evaluated over $100$ independent runs.

Table~\ref{tab:cced_ablation} shows that increasing the particle count consistently improves both solution quality and stability. As $M$ increases from $64$ to $2048$ with $\beta_{\max}=10$, the mean cost decreases from $105161.669$ to $104707.598$, while the sample standard deviation decreases from $243.555$ to $74.224$. The best cost also improves from $104691.425$ to $104521.263$. These results indicate that a larger particle population improves coverage of low-objective regions and reduces finite-particle variability.

Increasing $\beta_{\max}$ also strengthens objective optimization. With $M=512$, raising $\beta_{\max}$ from $0$ to $50$ reduces the mean cost from $107183.587$ to $104759.376$ and the best cost from $106146.715$ to $104519.511$. The standard deviation decreases substantially up to $\beta_{\max}=10$ and changes only slightly at $\beta_{\max}=20, 50$. This trend shows that stronger objective tilting moves the particle system toward lower-cost regions, while overly strong concentration may trade stability for additional objective improvement.

The observed trends are consistent with Proposition~\ref{prop:optimization_error}: increasing $M$ reduces the finite-particle contribution, while increasing $\beta_{\max}$ reduces the finite-temperature contribution. We use $M=512$ and $\beta_{\max}=10$ as the default configuration because it provides a favorable balance between computational cost, solution quality, and stability. Larger settings further improve the mean or best cost but require more particles or produce slightly higher variability.

\begin{table*}[t!]
\centering
\caption{Robust waveform results with $N_t=16$ and $K=3$. Learning-based power statistics are from 100 runs. The final column reports per-user chance-constraint satisfaction probabilities on the returned decisions, averaged across runs for learning-based methods. For a traditional baseline it refers to its single solution. Bold power values are the lowest mean and best among learning-based methods.}
\label{tab:app-waveform-unified}
\resizebox{\textwidth}{!}{\begin{tabular}{clcccc}
\toprule
$\rho$ & Method & Power Mean $\downarrow$ & Power Std $\downarrow$ & Power Best $\downarrow$& $P_1/P_2/P_3$  \\
\midrule
\multirow{6}{*}{$0.05$} & Sphere Bounding~\citep{wang2014outage} & $0.1374$ & -- & -- & $(0.99;0.99;0.99)$\\
& Bernstein-type Inequality~\citep{wang2014outage} & $0.1260$ & -- & --& $(0.97;0.98;0.95)$ \\
\cmidrule(lr){2-6}
& GGDOpt First-order~\citep{zhang2025ggdopt} & $0.127992$ & $0.000205$ & $0.127339$& $(0.99;0.98;0.95)$ \\
& GGDOpt Second-order~\citep{zhang2025ggdopt} & $0.127989$ & $0.000206$ & $0.127335$& $(0.98;0.98;0.95)$  \\
& DiffOPT~\citep{kong2025constrained} & $0.246433$ & $0.000565$ & $0.246110$& $(0.99;0.98;0.97)$ \\
& \textbf{D$^3$Opt} & $\mathbf{0.127732}$ & $0.000209$ & $\mathbf{0.127275}$& $(0.97;0.95;0.96)$ \\
\cmidrule(lr){1-6}
\multirow{6}{*}{$0.10$} & Sphere Bounding~\citep{wang2014outage} & $0.1366$ & -- & -- & $(0.99;0.99;0.99)$\\
& Bernstein-type Inequality~\citep{wang2014outage} & $0.1253$ & -- & --&  $(0.93;0.94;0.93)$\\
\cmidrule(lr){2-6}
& GGDOpt First-order~\citep{zhang2025ggdopt} & $0.127941$ & $0.000251$ & $0.127019$& $(0.92;0.91;0.97)$ \\
& GGDOpt Second-order~\citep{zhang2025ggdopt} & $0.127938$ & $0.000252$ & $0.127009$& $(0.95;0.92;0.92)$ \\
& DiffOPT~\citep{kong2025constrained} & $0.237796$ & $0.000598$ & $0.236225$& $(0.97;0.94;0.94)$ \\
& \textbf{D$^3$Opt} & $\mathbf{0.126043}$ & $0.000187$ & $\mathbf{0.125518}$& $(0.92;0.90;0.91)$ \\
\cmidrule(lr){1-6}
\multirow{6}{*}{$0.15$} & Sphere Bounding~\citep{wang2014outage} & $0.1361$ & -- & -- & $(0.99;0.99;0.99)$\\
& Bernstein-type Inequality~\citep{wang2014outage} & $0.1248$ & -- & --& $(0.90;0.90;0.92)$ \\
\cmidrule(lr){2-6}
& GGDOpt First-order~\citep{zhang2025ggdopt} & $0.127983$ & $0.000187$ & $0.127474$ & $(0.90;0.87;0.92)$\\
& GGDOpt Second-order~\citep{zhang2025ggdopt} & $0.127981$ & $0.000189$ & $0.127467$& $(0.90;0.87;0.86)$ \\
& DiffOPT~\citep{kong2025constrained} & $0.245401$ & $0.000475$ & $0.245118$& $(0.90;0.92;0.89)$ \\
& \textbf{D$^3$Opt} & $\mathbf{0.125047}$ & $0.000177$ & $\mathbf{0.124726}$& $(0.85;0.87;0.86)$ \\
\cmidrule(lr){1-6}
\multirow{6}{*}{$0.20$} & Sphere Bounding~\citep{wang2014outage} & $0.1357$ & -- & --& $(0.99;0.99;0.99)$ \\
& Bernstein-type Inequality~\citep{wang2014outage} & $0.1244$ & -- & --& $(0.90;0.89;0.91)$  \\
\cmidrule(lr){2-6}
& GGDOpt First-order~\citep{zhang2025ggdopt} & $0.127964$ & $0.000245$ & $0.127006$ & $(0.90;0.82;0.88)$ \\
& GGDOpt Second-order~\citep{zhang2025ggdopt} & $0.127962$ & $0.000247$ & $0.126994$ & $(0.81;0.85;0.88)$\\
& DiffOPT~\citep{kong2025constrained} & $0.211671$ & $0.000463$ & $0.210839$& $(0.88;0.87;0.91)$ \\
& \textbf{D$^3$Opt} & $\mathbf{0.124361}$ & $0.000184$ & $\mathbf{0.123822}$& $(0.85;0.82;0.80)$ \\
\bottomrule
\end{tabular}}
\end{table*}
\subsection{Robust Waveform Experiments}
\label{app:robust_waveform}
We further evaluate the method on robust multi-user waveform design with $N_t=16$ transmit antennas and $K=3$ users. Let $\mathbf{w}_k\in\mathbb{R}^{N_t}$ denote the beamforming vector for user $k$, and let $\mathbf{W}=(\mathbf{w}_1,\ldots,\mathbf{w}_K)$. The problem minimizes total transmission power subject to individual signal-to-interference-plus-noise ratio chance constraints:
\begin{align}
\min_{\mathbf{W}}\quad & f(\mathbf{W})=\sum_{k=1}^{K}\|\mathbf{w}_k\|_2^2, \label{eq:app_waveform}\\
\text{s.t.}\quad & \mathbb{P}\left\{\frac{(\mathbf{h}_k^\top\mathbf{w}_k)^2}{\sum_{j\ne k}(\mathbf{h}_k^\top\mathbf{w}_j)^2+\sigma_i^2}\geq\gamma\right\}\geq1-\rho,\qquad k=1,\ldots,K. \nonumber
\end{align}
The uncertain channel is modeled as $\mathbf{h}_k=\bar{\mathbf{h}}_k+\mathbf{e}_k$, where $\mathbf{e}_k\sim\mathcal{N}(\mathbf{0},\sigma_e^2\mathbf{I})$. Following GGDOpt~\citep{zhang2025ggdopt}, we set $\gamma=0.5$, $\sigma_e^2=0.002$, and $\sigma_i=1$, and evaluate $\rho\in\{0.05,0.10,0.15,0.20\}$.
\paragraph{Baselines and Evaluation}
We compare against Sphere Bounding~\citep{wang2014outage}, Bernstein-type Inequality~\citep{wang2014outage}, first- and second-order GGDOpt~\citep{zhang2025ggdopt}, and DiffOPT~\citep{kong2025constrained}. DiffOPT uses iterative importance sampling, whereas D$^3$Opt applies objective-value-based Feynman--Kac correction during reverse diffusion.

Each learning-based method is evaluated over $100$ runs. We report mean, sample standard deviation, and best power. For user $k$, let $\hat P_{r,k}$ be the fraction of test scenarios satisfying its SINR constraint for returned decision $r$. Table~\ref{tab:app-waveform-unified} reports $\bar P_k=100^{-1}\sum_{r=1}^{100}\hat P_{r,k}$ for learning-based methods; traditional-baseline triplets refer to one solution. These run averages do not give the number of individually feasible returns.

\textbf{Results} Table~\ref{tab:app-waveform-unified} shows that D$^3$Opt achieves the lowest mean and best transmission power among the evaluated learning-based methods at all four risk levels. The minimum of its three run-averaged user satisfaction probabilities is $0.95$, $0.90$, $0.85$, and $0.80$ for $\rho=0.05$, $0.10$, $0.15$, and $0.20$, respectively, matching the nominal $1-\rho$ thresholds at the reported two-decimal precision. Relative to second-order GGDOpt, D$^3$Opt reduces mean power by $0.20\%$, $1.48\%$, $2.29\%$, and $2.81\%$. For D$^3$Opt, mean power decreases monotonically as $\rho$ increases. These results are consistent with using the allowed risk budget closely across the tested risk levels without retraining.

\end{document}